\documentclass{article}

\PassOptionsToPackage{round}{natbib}

\usepackage[final]{neurips_2019}
\usepackage[utf8]{inputenc} \usepackage[T1]{fontenc}    \usepackage{hyperref}       \usepackage{url}            \usepackage{booktabs}       \usepackage{amsmath, amssymb, amsthm, amsfonts, bbm}
\usepackage{nicefrac}       \usepackage{microtype}      \usepackage{comment}
\usepackage{color}
\usepackage{caption}
\usepackage{cleveref}
\usepackage{algorithm}
\usepackage{algorithmic}
\usepackage{graphicx}    
\usepackage{mathtools}
\usepackage{nicefrac}
\usepackage{xcolor}
\usepackage{caption}
\usepackage{subcaption}
\usepackage{mathrsfs} \usepackage{wrapfig}

\graphicspath{{figures/}}

\newcommand{\C}{\mathbb{C}}
\newcommand{\E}{\mathbb{E}}
\newcommand{\R}{\mathbb{R}}
\newcommand{\N}{\mathbb{N}}
\newcommand{\T}{\mathbb{T}}

\def\mydefc#1{\expandafter\def\csname c#1\endcsname{\mathcal{#1}}}
\def\mydefallc#1{\ifx#1\mydefallc\else\mydefc#1\expandafter\mydefallc\fi}
\mydefallc ABCDEFGHIJKLMNOPQRSTUVWXYZ\mydefallc

\DeclareMathOperator*{\maximize}{maximize\ }
\DeclareMathOperator{\Var}{Var}
\DeclareMathOperator{\tr}{\mathbf{tr}}

\DeclareMathOperator{\Uniform}{Uniform}
\DeclarePairedDelimiter{\ceil}{\lceil}{\rceil}

\newcommand{\Norm}{\mathcal{N}}

\newcommand{\PSD}[1]{\mathbb{P}_+^{#1}}
\newcommand{\PD}[1]{\mathbb{P}_{++}^{#1}}
\newcommand{\norm}[1]{\left\| {#1} \right\|}

\newcommand{\term}[1]{\textbf{{Term}}_{#1}}

\newcommand{\onevars}[1]{\mathbbm{1}(#1)}
\newcommand{\sumH}{\sum_{t=1}^H}
\newcommand{\sumIJ}{\sum_{1 \leq i, j \leq H}}
\newcommand{\ABKradius}{f}
\newcommand{\gtg}{{\hat g^\top \hat g}}
\newcommand{\half}{{\frac{1}{2}}}

\newcommand{\initstate}{\varrho}
\newcommand{\sumvar}{\nu}
\newcommand{\rtau}{\phi}

\newtheorem{theorem}{Theorem}
\newtheorem{lemma}[theorem]{Lemma}

\title{Analyzing the Variance of Policy Gradient Estimators\\for the Linear-Quadratic Regulator}
\date{\today}
\author{
\hspace{-2mm}
James A. Preiss\thanks{Equal contribution.}, \ 
S\'ebastien M. R. Arnold\footnotemark[1], \ 
Chen-Yu Wei\footnotemark[1] \\
Department of Computer Science \\
University of Southern California \\
Los Angeles, USA \\
\texttt{\{japreiss, seb.arnold, chenyu.wei\}@usc.edu}
\And
Marius Kloft \\
Department of Computer Science \\
T.U. Kaiserslautern \\
Kaiserslautern, Germany \\
\texttt{neu@cs.uni-kl.de}
\hspace{-2mm}
}

\begin{document}

\maketitle

\begin{abstract}
We study the variance of the REINFORCE policy gradient estimator in environments with continuous state and action spaces, linear dynamics, quadratic cost, and Gaussian noise. These simple environments allow us to derive bounds on the estimator variance in terms of the environment and noise parameters. We compare the predictions of our bounds to the empirical variance in simulation experiments.
\end{abstract}

\section{Introduction}\label{introduction} 
Policy gradient (PG) algorithms are widely used for reinforcement learning (RL)
in continuous spaces.
PG methods construct an unbiased estimate of the gradient of the RL objective with respect to the policy parameters.
They do so without the complication of intermediate steps of dynamics modeling or value function approximation \citep{sutton-barto}.
However, the gradient estimate is known to suffer from high variance. This makes PG methods sample-inefficient with respect to environment interaction,
creating an obstacle for applications to real physical systems.

In this paper, we seek a more detailed understanding of how the PG gradient estimate variance
relates to properties of the continuous-space Markov decision process (MDP)
that defines the RL problem instance.
Such characterization is well-developed for discrete state spaces \citep[e.g.][]{greensmith2004variance},
but in continuous spaces, a detailed breakdown is not possible
without further restrictions on the set of MDPs and the policy class.
We choose to examine systems with
linear dynamics, linear policy, quadratic cost, and Gaussian noise,
known as LQR systems in control theory.

LQR systems are a popular case study for analyzing RL algorithms in continuous spaces.
\citet{fazel2018global} show that the optimization landscape is neither convex nor smooth,
but still admits global convergence and PAC-type complexity bounds for
a zeroth-order optimization method that explores in parameter space. \citet{malik-derivativefree-lqr} establish tighter error bounds with more detailed problem-dependence for the same class of algorithms,
extended to include noisy dynamics.
\citet{yang-actorcritic-lqr} show convergence for an actor-critic method.
\citet{recht-tour-rl-control} provide a broad overview including
summaries of the group's prior work on model-based methods and value function approximation.
Our variance bounds are similar to those of \citet{malik-derivativefree-lqr}, but we 
focus on characterizing the variance of policy gradient methods,
which explore by taking random actions at each time step,
rather than parameter-space exploration methods.
\citet{tu-modelbased-lqr} show related bounds for a restricted class of LQR systems
as an intermediate step towards sample complexity results.

The earliest and simplest policy gradient algorithm is REINFORCE \citep{williams-REINFORCE}.
More recent algorithms such as TRPO and PPO \citep{schulman-trpo,schulman-ppo}
extend the basic idea of REINFORCE with techniques to
inhibit the possibility of making very large changes in the policy action distribution in a single step.
In a benchmark test \citep{duan-benchmarking}, these algorithms generally learned better policies
than REINFORCE, but their additional complexity makes them hard to analyze.

Our primary contributions are derivations of bounds on the variance of the REINFORCE gradient estimate
as an explicit function of the dynamics, reward, and noise parameters of the LQR problem instance.
We validate our bounds with comparisons to the empirical gradient variance
in random problems.
We also explore the relationship between gradient variance and sample complexity,
but find it to be less straightforward,
as the problem parameters that affect variance also affect the optimization landscape.
We emphasize that our goal is not to draw a conclusion about the utility of using REINFORCE to solve LQR problems,
but rather to use LQR as an example system that is simple enough to allow us to ``look inside'' the REINFORCE policy gradient estimator.

\section{Problem setting}
In this section, we define
the general finite-horizon reinforcement learning problem,
the REINFORCE policy gradient estimator,
and the LQR optimal control problem.
We use the notation $\cP(\cX)$
for the set of probability distributions over a measurable set $\cX$.
For arbitrary sets $\cX$ and $\cY$, the set of all functions $\cX \mapsto \cY$ is denoted as $\cY^\cX$.
Let $\|\cdot\|$ denote the $\ell_2 - \ell_2$ operator norm of a matrix or the $\ell_2$ norm of a vector.
The spectral radius (magnitude of largest eigenvalue) of a square matrix is denoted by $\rho(\cdot)$.
The the set of positive semidefinite (resp. positive definite) $k \times k$ matrices is denoted by $\PSD{k}$ (resp. $\PD{k}$).
Finally, $\Sigma^{\half}$ denotes the principal matrix square root of $\Sigma \in \PSD{k}$.

\paragraph{RL problem statement.}
\label{sec:rl-statement}
Reinforcement learning takes place in a Markov Decision Process (MDP)
defined by
state space $\cS$,
action space $\cA$,
initial state distribution $\initstate \in \cP(\cS)$,
state transition function $T : \cS \times \cA \mapsto \cP(\cS)$,
and reward function $r : \cS \times \cA \mapsto \cP(\R)$.
The agent's actions are drawn according to a stochastic policy $\pi : \cS \mapsto \cP(\cA)$.
Let $\tau$ denote a state-action trajectory
$s_1, a_1, s_2, a_2, \dots, s_H, a_H$,
of horizon $H \in \N$, and
${\T = (\cS \times \cA)^H}$
the set of all such trajectories.
The policy $\pi$ induces a trajectory distribution
$p_\pi \in \cP(\T)$, meaning
${s_1 \sim \initstate},\
{a_t \sim \pi(s_t)},\
{s_{t+1} \sim T(s_t,a_t)},\
{r_t \sim r(s_t, a_t)}$.
The RL optimization problem is
defined over a tractable policy space $\Pi \subseteq \cP(\cA)^{\cS}$ as follows:
\begin{equation}\begin{split}
\maximize_{\pi \in \Pi} \; &
J(\pi) = \E_{\tau \sim p_\pi} [\rtau(\tau)],
\; \text{where} \;
\rtau(\tau) = \textstyle \sum_{t=0}^H r_t.
\label{eq:rl-objective}
\end{split}\end{equation}
The MDP parameters $(\initstate, T, r)$ are unknown to the RL algorithm.
The algorithm must learn exclusively by sampling from $p_\pi$.

\paragraph{Policy gradient algorithms.}
\label{sec:policy-gradient}
Suppose $\pi$ is parameterized by a real-valued vector $\theta$.
Analytical gradient descent of $\nabla_\theta J(\pi)$ is not possible
because $T$ and $r$ are only accessible via sampling, with unknown gradients.
There exist generic derivative-free optimization algorithms that solve
such problems via perturbations in parameter space,
but in RL problems, it is also possible to explore via stochastic actions instead.
The simplest policy gradient algorithm is REINFORCE \citep{williams-REINFORCE},
which relies on the following identity (assuming sufficient regularity conditions):
\begin{equation}
	\nabla_\theta J(\pi)
	= \nabla_\theta \int_\T p_\pi(\tau) \rtau(\tau) d\tau\\
	= \E_{\tau \sim p_\pi}\left[ \rtau(\tau)\sum_{t=1}^H \nabla_\theta \log \pi(a_t|s_t) \right]. 
	\label{eq:reinforce}
\end{equation}
An unbiased estimate of the latter expectation can be computed by executing $\pi$ in the MDP for one full trajectory.
Unfortunately, this estimate is known to have high variance. Variance reduction is possible by exploiting the Markov property (past rewards are independent of future actions)
and/or using control variates~\citep{greensmith2004variance},
but we analyze plain REINFORCE here for simplicity.

\paragraph{LQR systems.}
\label{sec:lqr}
A discrete-time stochastic \emph{linear-quadratic regulator} (LQR) system
with state space $\cS = \R^n$ and action space $\cA = \R^m$
is defined by linear dynamics with additive Gaussian noise:
\begin{equation}
    s_{t+1} = A s_t + B a_t + \epsilon^s_t,\ \quad \epsilon^s_t \sim \Norm(\mathbf{0}, \Sigma_s),
\label{eq:lqr-dynamics}
\end{equation}
for dynamics matrices $A \in \R^{n \times n},\ B \in \R^{n \times m}$,
and noise covariance $\Sigma_s \in \PSD{n}$.
The reward function is given by $r_t = -(s_t^T Q s_t + a_t^T R a_t)$
for cost matrices $Q \in \PSD{n},\ R \in \PD{m}$.
Intuitively, the goal is to drive the state towards zero without using too much control effort.
The initial state $s_1$ follows an arbitrary zero-mean Gaussian distribution.
A well-known result in control theory \citep{kwakernaak1972linear}
states that, if the system $(A, B)$ is controllable,
the infinite-horizon objective
\begin{equation}
\label{eq:lqr-cost}
\lim_{H \to \infty} \E_{s_1, \{\epsilon^s_t\}_{t = 1}^H} \left[ \textstyle \sum_{t=1}^H r_t \right]
\end{equation} 
is maximized by a stationary linear policy $a_t = K^\star s_t,\ K^\star \in \R^{m \times n}$.
The value of  $K^\star$ depends on $(A, B, Q, R)$,
but not on the distributions of $\Sigma_s$ or $s_1$.
The same $K^\star$ is also the optimal controller for the deterministic version of the problem.
$K^\star$ can be computed efficiently \citep{vandooren-riccati}. To apply REINFORCE, the policy must be stochastic,
so we consider linear \textit{stochastic} policies
\begin{equation}
a_t = Ks_t + \epsilon_t^a, \quad \epsilon_t^a \sim \Norm(\mathbf{0}, \Sigma_a),
\label{eq:lqr-policy}
\end{equation}
for $K \in \R^{m \times n}, \Sigma_a \in \PD{m}$.
The state noise $\Sigma_s$ is an immutable property of the system,
but the action noise $\Sigma_a$ is not.
Instead, it is usually chosen by the user of the RL algorithm,
or learned as a parameter using \eqref{eq:reinforce}.
Genuine noise in the system actuators can be subsumed into $\Sigma_s' = \Sigma_s + B \Sigma_a$.

In the RL literature, action noise is usually seen as either
1) a tool for exploring of the state space,
2) a method of regularization to avoid converging on bad local optima,
or 3) a consequence of a probabilistic interpretation of the RL problem \citep{levine-rl-inference}.
Its effect on the RL optimization algorithm is less frequently discussed,
but in this work we find that it can be significant.

\section{Main result: Variance bounds on the REINFORCE estimator}
In this section, we present bounds on the variance of the REINFORCE estimator for LQR systems.
The instantiation of REINFORCE (\cref{eq:reinforce}) for the system (\cref{eq:lqr-dynamics,eq:lqr-cost,eq:lqr-policy}) using a single trajectory is:
\begin{align}
    \hat{g} = \left(\sum_{t=1}^H \Sigma_a^{-1} \epsilon_t^{a}s_t^\top  \right)\left( \sum_{t=1}^H r_t \right) \in \R^{m\times n}.
    \label{eqn:g-hat}
\end{align}
The estimate $\hat g$ is a function of the independent random variables
$\{ \epsilon^a_t, \epsilon^s_t \}_{t=1}^H$.
Although $s_t$ is linear in 
$\{ \epsilon^a_\tau, \epsilon^s_\tau \}_{\tau=1}^{t-1}$,
$r_t$ is quadratic in $s_t$, so the overall form of $\hat g$ is
a product of a sum and a sum of products of sums.
Therefore, while it is possible to apply matrix concentration inequalities
\citep{tropp-matrix-concentration}
to bound $\|s_t - \E[s_t]\|$ with high probability,
it is more difficult to bound the dispersion of $\hat g$.
Instead, we use a more specialized method to derive a bound on
\[
\textstyle
\sumvar(\hat g) \triangleq
\sum_{i=1}^m \sum_{j=1}^n \Var(\hat g_{i,j}) =
\E\left[\tr(\hat{g}^\top\hat{g})\right] -
\tr(\E[\hat{g}]^\top\E[\hat{g}]),
\]
which we simplify by bounding
$\E\left[\tr(\hat{g}^\top\hat{g})\right]$.

\begin{theorem}
    \label{thm:upper}
    If $\rho(A+BK)<1$, then
    \(
        \E\left[\tr(\hat{g}^\top\hat{g})\right]\leq O\left(\bar n^4 C_1^2 C_2^2\right),
    \)
    where
    \begin{align*}
        C_1&=
        \mu^2 \norm{\Sigma_a^{-\frac{1}{2}}}  \left(\norm{s_1} + \sigma H\right)H^{\prime}, \\
        C_2&= 
        \norm{R} \norm{\Sigma_a}H +  \mu^2 \left(\norm{Q} + \norm{R} \norm{K}^2\right)\left(\norm{s_1}^2 + \sigma^2 H\right)H^{\prime 2},
    \end{align*}
                        $\bar n \triangleq \max \{n, m\}$,
    $\sigma \triangleq \norm{\Sigma_s^{\frac{1}{2}}} + \norm{B \Sigma_a^{\frac{1}{2}}}$,
    $H'\triangleq \min\left\{H ,\frac{1}{1-\rho(A+BK)}\right\}$,
    and $\mu$ is a constant bounding the transient behavior of $\|A+BK\|^t$,
    with more details provided in \Cref{appendix:upper}.
\end{theorem}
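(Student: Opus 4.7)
The plan is to factor the estimator to isolate the two main stochastic objects and reduce the problem to two separate fourth-moment calculations. Writing $\hat g = G\cdot R$ with $G = \sum_{t=1}^H \Sigma_a^{-1}\epsilon_t^a s_t^\top \in \R^{m\times n}$ and the scalar $R = \sum_{t=1}^H r_t$, we have $\tr(\hat g^\top \hat g) = R^2 \|G\|_F^2$, so Cauchy--Schwarz gives $\E[\tr(\hat g^\top \hat g)] \leq \sqrt{\E[R^4]\cdot \E[\|G\|_F^4]}$. The remaining work is to show $\sqrt{\E[\|G\|_F^4]} \lesssim \bar n^2 C_1^2$ and $\sqrt{\E[R^4]} \lesssim \bar n^2 C_2^2$, which multiplied together reproduce the claimed $O(\bar n^4 C_1^2 C_2^2)$ bound.

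The common ingredient for both moments is pointwise control of the state. Unrolling the closed loop gives $s_t = (A+BK)^{t-1}s_1 + \sum_{\tau=1}^{t-1}(A+BK)^{t-1-\tau}(B\epsilon_\tau^a + \epsilon_\tau^s)$, so $s_t$ is Gaussian with mean $(A+BK)^{t-1}s_1$ and an explicit Lyapunov-form covariance. Using the transient bound $\|(A+BK)^t\| \leq \mu\,\rho(A+BK)^t$ together with the geometric sum $\sum_{\tau \geq 0}\rho(A+BK)^\tau \lesssim H'$ yields the uniform-in-$t$ estimate $\E[\|s_t\|^2] \leq \mu^2(\|s_1\|^2 + \sigma^2 H')$; by the Gaussian moment identity $\E[\|X\|^4] \lesssim (\|\E X\|^2 + \tr(\Cov X))^2$, this lifts to $\E[\|s_t\|^4]^{\half} \lesssim \bar n \mu^2(\|s_1\|^2 + \sigma^2 H')$.

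To bound $\E[\|G\|_F^4]$, I would expand
\[
\E[\|G\|_F^4] = \sum_{t_1,t_2,t_3,t_4} \E\bigl[(s_{t_1}^\top s_{t_2})(s_{t_3}^\top s_{t_4})\langle \Sigma_a^{-1}\epsilon_{t_1}^a, \Sigma_a^{-1}\epsilon_{t_2}^a\rangle \langle \Sigma_a^{-1}\epsilon_{t_3}^a, \Sigma_a^{-1}\epsilon_{t_4}^a\rangle\bigr].
\]
Because $\epsilon_t^a$ is independent of everything at times other than $t$ and has mean zero, only terms in which every distinct time index appears at least twice survive, leaving an $O(H^2)$ sum of ``paired'' quadruples. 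Each surviving pair contributes a factor $\tr(\Sigma_a^{-2}) \lesssim m \|\Sigma_a^{-\half}\|^4$, and the residual state factor $\E[\|s_t\|^2\|s_{t'}\|^2]$ is controlled by Cauchy--Schwarz and the state bound above. For $\E[R^4]$, I would write $R$ as a sum of $2H$ Gaussian quadratic forms $s_t^\top Q s_t$ and $a_t^\top R a_t$, use $(\sum_t x_t)^2 \leq 2H \sum_t x_t^2$ together with $\E[(X^\top M X)^2] \lesssim \|M\|^2(\|\E X\|^2 + \tr(\Cov X))^2$, and split $a_t = Ks_t + \epsilon_t^a$ into an ``action-noise'' contribution (generating the additive $\|R\|\|\Sigma_a\|H$ piece of $C_2$) and a ``state-linear'' contribution (folding into the $(\|Q\|+\|R\|\|K\|^2)(\|s_1\|^2+\sigma^2 H)(H')^2$ piece).

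The main obstacle will be the bookkeeping of four sources of noise interacting across time. The action noises $\epsilon_t^a$ enter the state at times $>t$, so naive independence inside $G$ is lost, and the product $G\cdot R$ shares a single trajectory. Cauchy--Schwarz cleanly decouples $G$ from $R$ at the cost of fourth moments, and careful conditioning together with Isserlis' theorem handles the remaining within-$G$ and within-$R$ dependencies. A secondary delicate point is distinguishing the horizon factors $H$ (arising from sums with no exponential decay) from $H'$ (arising from sums against $(A+BK)^t$, where geometric decay caps the sum); matching this split exactly to the form of $C_1$ and $C_2$ will require carefully tracking which sums are ``decay-protected'' by a power of $A+BK$.
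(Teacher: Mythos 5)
Your route is sound and genuinely different from the paper's. The paper never decouples the two factors of $\hat g$ by Cauchy--Schwarz; instead it dominates $\tr(\hat g^\top \hat g)$ \emph{pointwise} by a single polynomial $P$ with nonnegative coefficients in the $\chi$-distributed norms $\|\delta^s_t\|,\|\delta^a_t\|$ (after bounding $\|s_t\|$ via the resolvent constant $\mu$), then bounds $\E[P]$ by (sum of coefficients) $\times$ (worst-case monomial moment), the latter being the 8th moment of a $\chi(\bar n)$ variable, which is where the single $\bar n^4$ factor comes from. That device deliberately discards all independence --- every noise term is treated as if it added coherently --- which keeps the bookkeeping trivial but is exactly why the bound carries $\sigma^2 H'^2$ where your per-step estimate $\E\|s_t\|^2 \lesssim \mu^2(\|s_1\|^2 + \bar n\sigma^2 H')$ gives $\sigma^2 H'$; your approach would in fact yield a slightly tighter intermediate bound, in the direction of the tightening the authors conjecture after Theorem 2. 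One step of your plan as written is not correct, though it is repairable: in the expansion of $\E[\|G\|_F^4]$ it is \emph{not} true that only quadruples in which every distinct time index appears at least twice survive, because $\epsilon^a_t$ feeds into $s_{t'}$ for all $t'>t$, so a singleton $\epsilon^a_{t_i}$ is generally correlated with the state factors; the mean-zero conditioning argument only kills quadruples whose \emph{largest} time index appears exactly once, so the surviving count is not $O(H^2)$. This does not sink the proof: since only an upper bound is needed, you can bound all terms in absolute value (Minkowski in $L^4$ plus the geometric decay of $\E[\|s_t\|^p]^{1/p}$ in the $\|s_1\|$ component) and still land within $O(\bar n^2 C_1^2)$, but you should replace the pairing/Isserlis count with that cruder estimate rather than rely on it. With that fix, and the analogous $L^4$ triangle-inequality treatment of $\sum_t r_t$ (you need fourth, not second, moments of the quadratic forms), your decomposition reproduces the stated $O(\bar n^4 C_1^2 C_2^2)$.
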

\emph{Proof Sketch.}
\begin{enumerate}
\item Rewrite $\epsilon^a_t, \epsilon^s_t$ as $\Sigma_a^{\half} \delta^a_t, \Sigma_s^{\half} \delta^s_t$,
where the $\delta^a_t, \delta^s_t$  are unit-Gaussian random variables.
\item Bound $\tr(\gtg)$ by $P$, a polynomial function of
the $\chi$-distributed random variables $\{ \|\delta^a_t\|, \|\delta^s_t\| \}_{t=1}^H$
with nonnegative coefficients.
\item Bound the sum of the coefficients of $P$ by substituting $1$ for all $\chi$ random variables.
\item Bound for the expectation of each monomial in $P$ using the moments of the $\chi$ distribution.
\end{enumerate}
A detailed proof of \Cref{thm:upper} is given in \Cref{appendix:upper}.

For a special case of scalar states and actions, we also show a lower bound on $\E [\hat g^2]$.
Since $\E[\hat g] = 0$ at a local optimum, this lower bound corresponds to
the variance caused strictly by noise in the system when the policy is already optimal.
Here, the matrices $A, B, K, Q, R$ are denoted as $a, b, k, q, r$,
and $\sigma_s,\ \sigma_a$ denote the standard deviation (not variance)
of state and action noise.
This notation $r$ is different from the notation $r_t$ for reward.
\begin{theorem}
    \label{thm:lower}
    If $m = n = 1$ and $0 \leq a + bk < 1$, then
    \(
        \E [\hat g^2] \geq \Omega(c_1^2 c_2^2), 
    \)
    with
    \begin{align*}
        c_1 &= \frac{1}{\sigma_a}\left(|s_1| + \sigma \sqrt{H}\right)\sqrt{h'}, \\
        c_2 &= r\sigma_a^2 H + (q+rk^2)(s_1^2+\sigma^2 H)h',
    \end{align*}
    where $\sigma \triangleq \sigma_s + b \sigma_a$ and $h' \triangleq \min \left\{ H, \frac{1}{1 - (a + bk)^2} \right\}$.  
\end{theorem}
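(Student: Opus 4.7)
\emph{Proof plan.}

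The plan is to mirror the first two steps of the upper-bound proof (standardize the noise and write $\hat g$ as a polynomial in the resulting unit Gaussians), but rather than discard signs to get an upper bound, retain them to isolate a dominant non-negative contribution. Substituting $\epsilon_t^a = \sigma_a u_t$ and $\epsilon_t^s = \sigma_s v_t$ with $u_t, v_t \sim \Norm(0,1)$ i.i.d.\ gives $\hat g = -XY$, where
\begin{equation*}
    X \triangleq \frac{1}{\sigma_a} \sum_{t=1}^{H} u_t s_t
    \quad \text{and} \quad
    Y \triangleq \sum_{t=1}^{H} \left(q s_t^2 + r a_t^2\right) \ge 0.
\end{equation*}
Because $u_t$ is independent of $s_t$ (causality), one obtains the exact identities
\begin{equation*}
    \E[X^2] = \frac{1}{\sigma_a^2} \sum_{t=1}^{H} \E[s_t^2]
    \quad \text{and} \quad
    \E[Y] = r \sigma_a^2 H + (q + r k^2) \sum_{t=1}^{H} \E[s_t^2].
\end{equation*}
Under $0 \le a+bk < 1$, iterating the recursion $\E[s_{t+1}^2] = (a+bk)^2 \E[s_t^2] + \sigma_s^2 + b^2 \sigma_a^2$ and estimating $\sum_{t=1}^{H} (a+bk)^{2(t-1)} \asymp h'$ (split according to whether $H$ exceeds $1/(1-(a+bk)^2)$) yields $\sum_{t=1}^{H} \E[s_t^2] \asymp (s_1^2 + \sigma^2 H)\, h'$, so $\E[X^2] \asymp c_1^2$ and $\E[Y] \asymp c_2$.

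It then suffices to prove $\E[X^2 Y^2] \gtrsim \E[X^2](\E[Y])^2$. Writing $Y = \E[Y] + (Y-\E[Y])$,
\begin{equation*}
    \E[X^2 Y^2] = \E[X^2](\E[Y])^2 + 2\,\E[Y]\,\Cov(X^2, Y) + \E\!\left[X^2 (Y-\E[Y])^2\right];
\end{equation*}
the third term is non-negative and $\E[Y] \ge 0$, so it is enough to prove the positive-correlation claim $\Cov(X^2, Y) \ge 0$. We would establish this by expanding $X^2 = \sigma_a^{-2} \sum_{t_1, t_2} u_{t_1} u_{t_2} s_{t_1} s_{t_2}$ and $Y = \sum_\tau (q s_\tau^2 + r a_\tau^2)$ as polynomials in $\{u_t, v_t\}_{t=1}^H$ and applying Isserlis' theorem summand by summand: the only nonvanishing pairings are those in which every Gaussian appears an even number of times, and under the hypothesis $a+bk \ge 0$ each $s_\tau$ is a non-negative linear combination of $s_1, \{u_\tau\}, \{v_\tau\}$ with coefficients $(a+bk)^{t-\tau}$, so every surviving pairing contributes a non-negative product of the non-negative constants $s_1^2, \sigma_s^2, \sigma_a^2, q, r, k^2$.

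The main obstacle is this last step, the verification of $\Cov(X^2, Y) \ge 0$ in full generality. The cases $H = 1, 2$ can be checked by direct fourth-moment calculations, but the general case requires careful bookkeeping of which $u_t$ factors inside $X^2$ can be paired with noise factors hidden inside $s_\tau^2$ and $a_\tau^2$, and this is where the assumption $0 \le a+bk$ (strictly stronger than the stability bound $|a+bk|<1$) enters to enforce sign consistency. The computation parallels, with signs tracked rather than discarded, steps~2--4 of the upper-bound proof sketch. Combining $\Cov(X^2, Y) \ge 0$ with the moment identifications above then yields $\E[\hat g^2] \ge \E[X^2](\E[Y])^2 \gtrsim c_1^2 c_2^2$.
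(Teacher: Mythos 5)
Your setup (standardizing the noise, computing $\E[X^2]=\sigma_a^{-2}\sum_t\E[s_t^2]$ and $\E[Y]=r\sigma_a^2H+(q+rk^2)\sum_t\E[s_t^2]$, and identifying $\sum_t\E[s_t^2]\asymp(s_1^2+\sigma^2H)h'$) matches the paper and is correct. The genuine gap is the step you yourself flag: the correlation inequality is asserted as a plan, not proved, and your Isserlis sketch as written does not quite prove it. Showing that ``every surviving pairing contributes a non-negative product'' only yields $\E[X^2Y]\ge 0$; to get $\Cov(X^2,Y)\ge 0$ you must compare $\E[X^2Y]$ \emph{against} $\E[X^2]\E[Y]$. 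The missing observation is this: write $X^2=\sum_iC_im_i$ and $Y=\sum_jD_jn_j$ as sums of monomials in the independent standard Gaussians with coefficients $C_i,D_j\ge 0$; then for each pair, $\E[m_i]\E[n_j]$ counts exactly the Isserlis pairings of $m_in_j$ that never match a factor of $m_i$ with a factor of $n_j$, which is a subset of the pairings counted by $\E[m_in_j]$, and every pairing contributes $0$ or $1$. Equivalently, per variable, $\E[x^{\alpha}]\E[x^{\beta}]\le\E[x^{\alpha+\beta}]$ for a standard normal (trivially when a parity makes one side vanish, and by $(2a-1)!!\,(2b-1)!!\le(2a+2b-1)!!$ otherwise). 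Summing over $i,j$ with non-negative coefficients gives $\Cov(X^2,Y)\ge 0$ and closes your argument. (One caveat you share with the paper: the non-negativity of the coefficients uses $a+bk\ge 0$ but also implicitly requires handling the signs of $s_1$ and $k$, e.g.\ the cross term $2k\sigma_a s_t u_t$ in $a_t^2$; $s_1$ can be taken non-negative WLOG by the sign symmetry of the Gaussians.)

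For comparison, the paper avoids the covariance term altogether: it proves the single product inequality $\E[X^2Y^2]\ge\E[X^2]\E[Y^2]$ by exactly the monomial-by-monomial super-multiplicativity argument above (applied to the pair $X^2$, $Y^2$ rather than $X^2$, $Y$), and then applies Jensen, $\E[Y^2]\ge(\E[Y])^2$, to reach $\E[X^2](\E[Y])^2$. Your decomposition
\begin{equation*}
\E[X^2Y^2]=\E[X^2](\E[Y])^2+2\,\E[Y]\Cov(X^2,Y)+\E\!\left[X^2(Y-\E[Y])^2\right]
\end{equation*}
is algebraically valid and reaches the same bound once $\Cov(X^2,Y)\ge 0$ is established, but it is not easier: the covariance inequality is an instance of the very same positive-association fact the paper needs, so you may as well prove that fact once in the form $\E[PQ]\ge\E[P]\E[Q]$ for polynomials $P,Q$ with non-negative coefficients in independent centered Gaussians, and then follow either route.
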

A detailed proof of \Cref{thm:lower} is given in \Cref{appendix:lower}.
If we reduce  the upper bound of \Cref{thm:upper} to its scalar case,
all terms match with the notable exception of the horizon-related terms
$H$ and $H'$ (compare to $h'$),
which appear squared in several places in \Cref{thm:upper}
compared to the equivalent term in \Cref{thm:lower}.
There is another gap in the denominators of $H'$ and $h'$
since $\frac{1}{1-x^2} < \frac{1}{1-x}$ on the domain $x \in (0, 1)$.
We conjecture our upper bound can be tightened by fully exploiting the independence of the noise variables $\epsilon_s,\ \epsilon_a$.

\section{Experiments}
\label{sec:experiments}
\begin{figure}[t]
\begin{subfigure}{0.32\textwidth}
\centering
\includegraphics[width=\columnwidth]{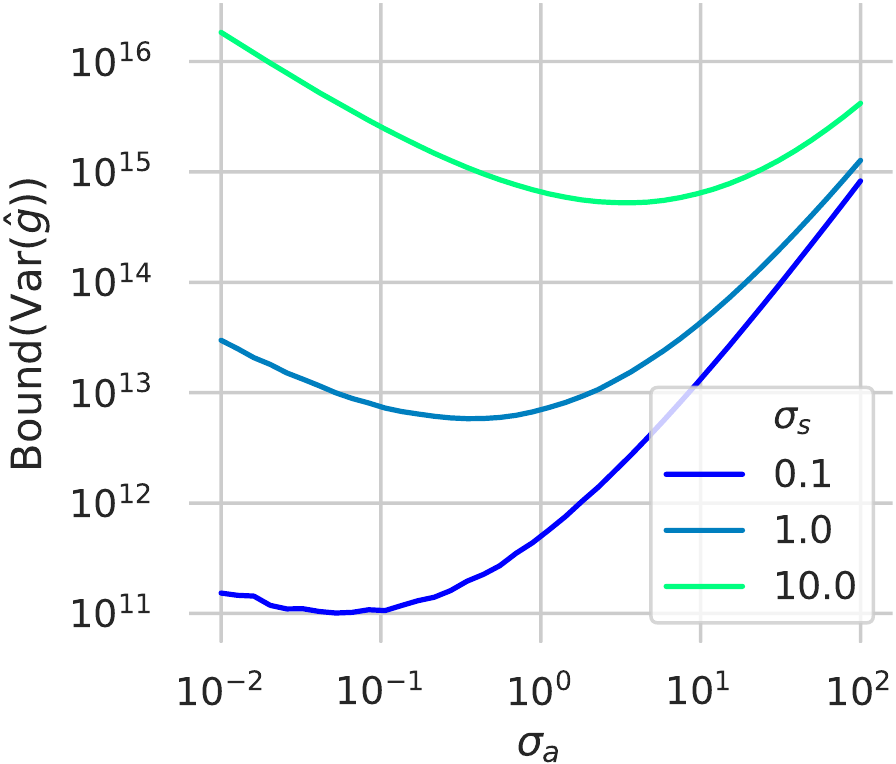} \\
\vspace{3mm}
\includegraphics[width=\columnwidth]{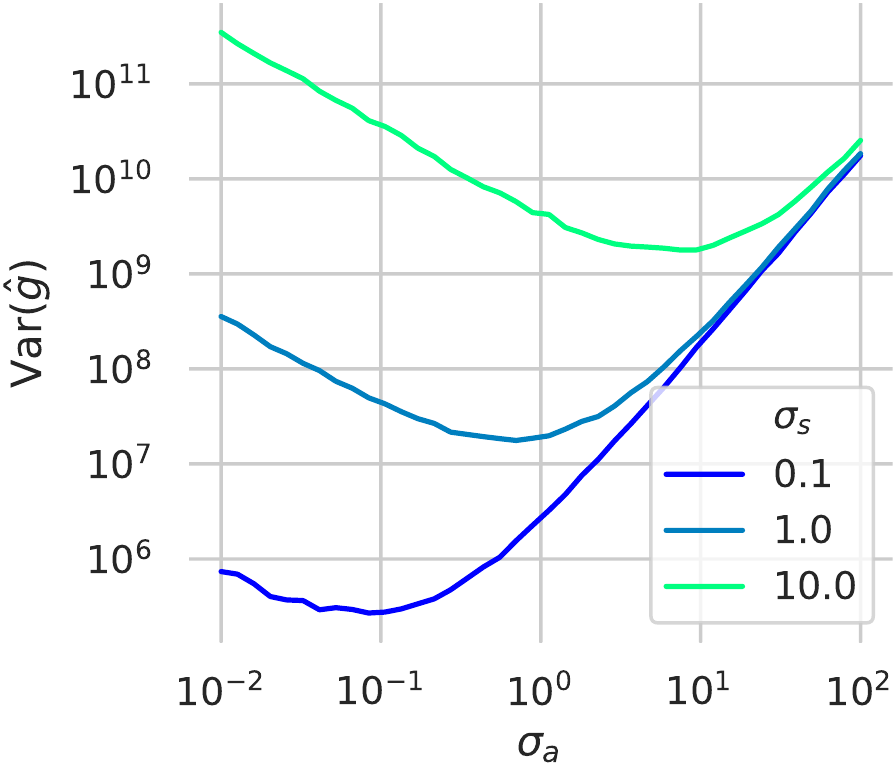} \\
\caption{Relationship between $\Sigma_s$ and $\Sigma_a$}
\label{fig:sigmaA}
\end{subfigure}
\hfill
\begin{subfigure}{0.32\textwidth}
\centering
\includegraphics[width=\columnwidth]{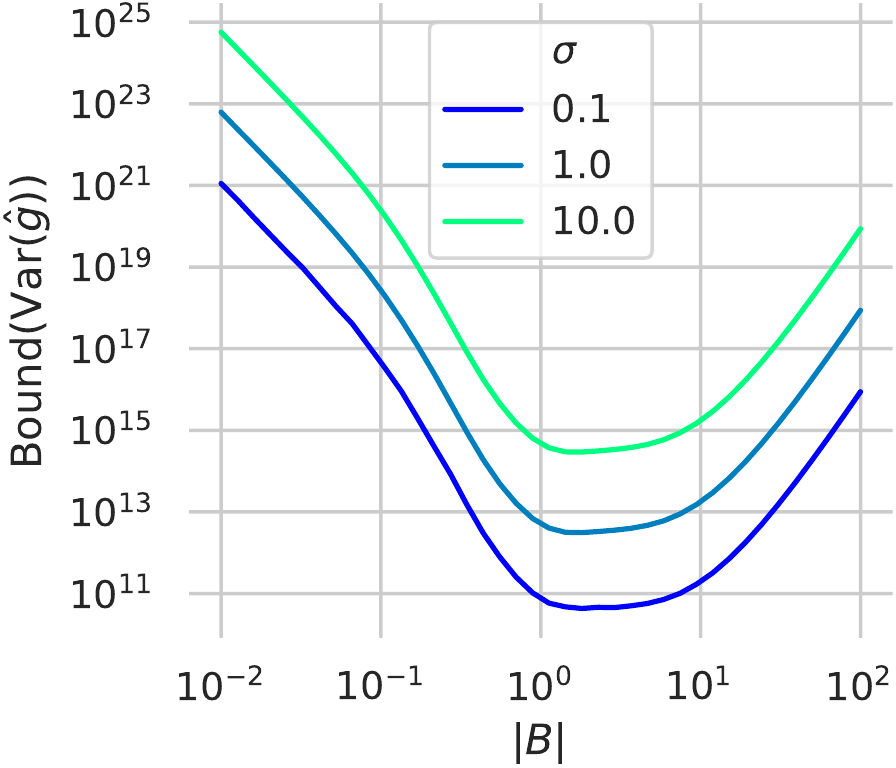} \\
\vspace{3mm}
\includegraphics[width=\columnwidth]{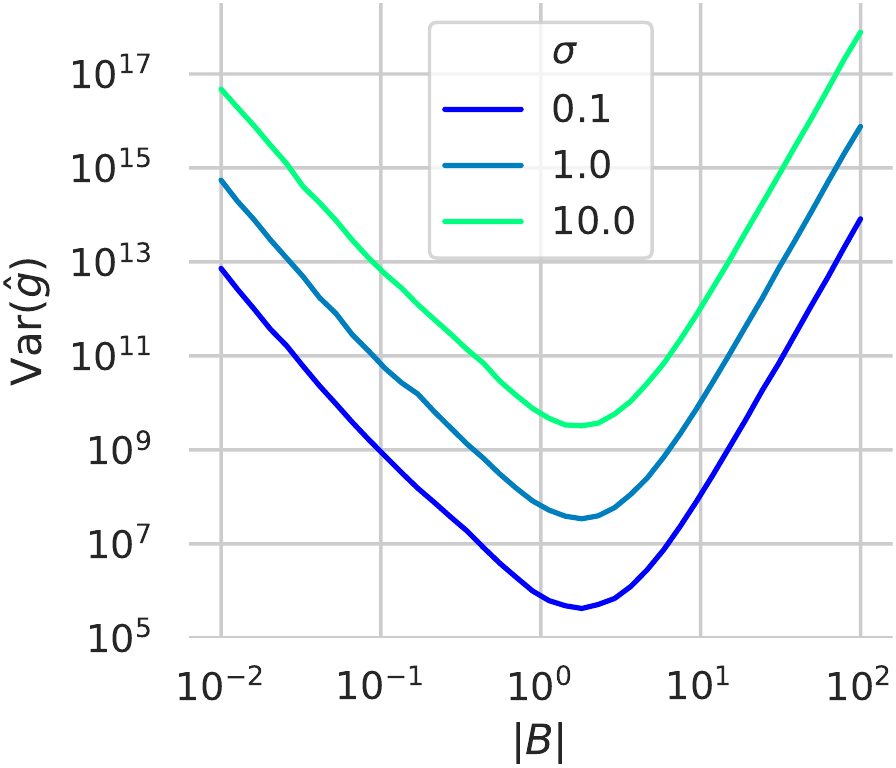} \\
\caption{Control authority $|B|$}
\label{fig:Bmag}
\end{subfigure}
\hfill
\begin{subfigure}{0.32\textwidth}
\centering
\includegraphics[width=\columnwidth]{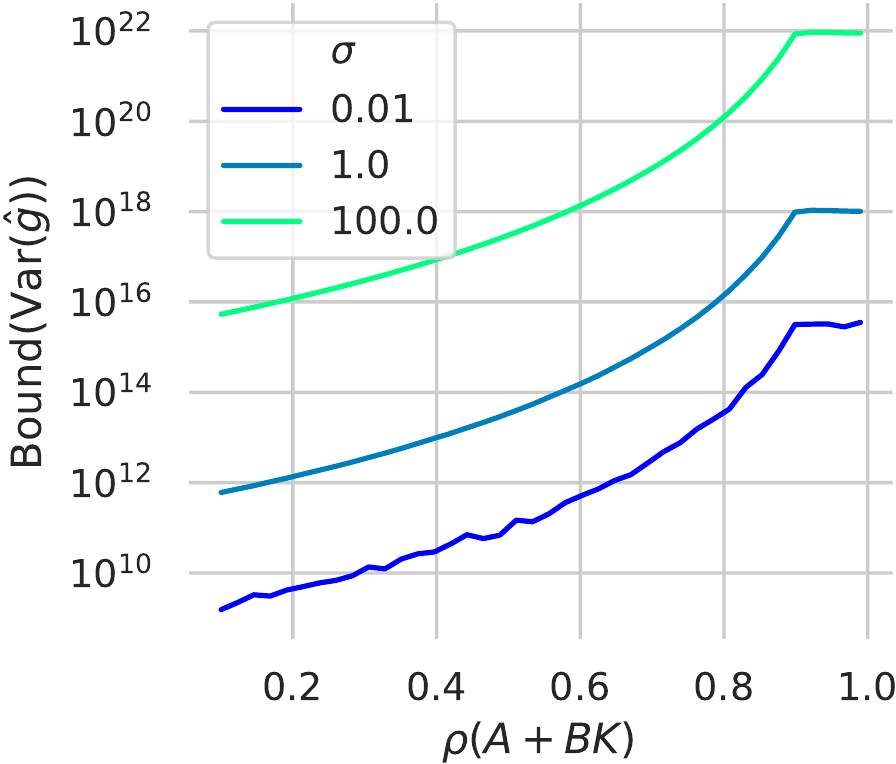} \\
\vspace{3mm}
\includegraphics[width=\columnwidth]{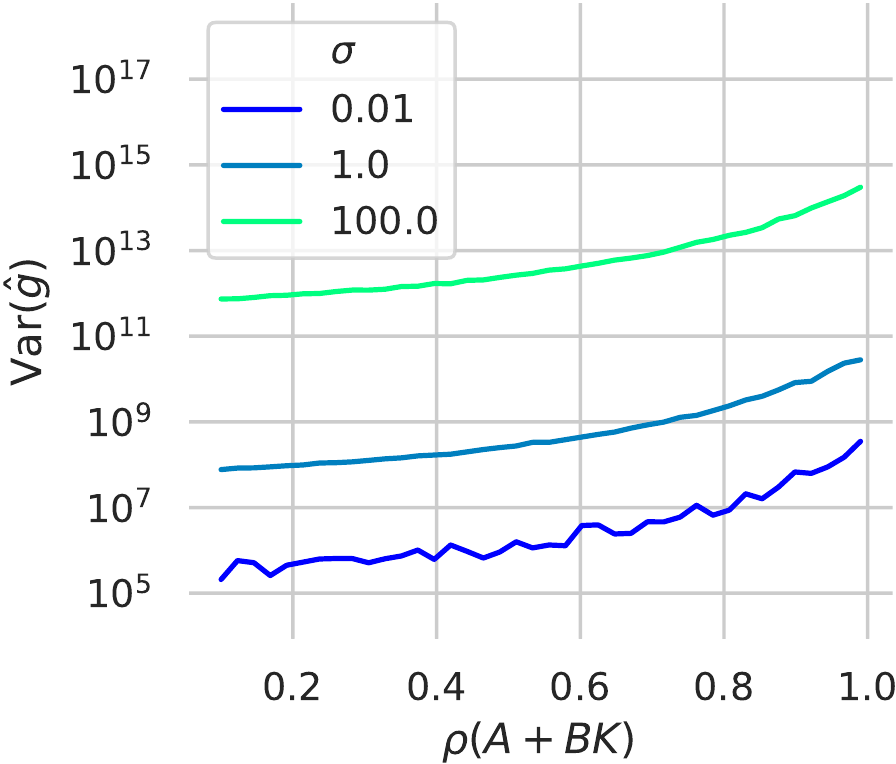} \\
\caption{Stability $\rho(A + BK)$}
\label{fig:rho}
\end{subfigure}
\caption{
Comparison between our upper bound from \Cref{thm:upper} (top)
and the empirically measured variance (bottom)
as they relate to various parameters of the LQR problem.
Behavior is qualitatively similar for action noise covariance (a)
and control authority (b),
but less similar for the stability (c)
where our bounds are loose.
Further discussion is in \Cref{sec:sigmaA,sec:Bmag,sec:rho}.
}
\label{fig:compare_empirical}
\end{figure}

In the first set of experiments, we compare our upper bound of $\sumvar(\hat g)$
to its empirical value when executing REINFORCE in randomly generated LQR problems.
Our results show qualitative similarity in the parameters
for which our upper and lower bounds match.
On the other hand, the gap with respect to stability-related parameters is also visible.
For each experiment shown here, we repeated the experiment with different random seeds
and observed qualitatively identical results.

We generate random LQR problems with the following procedure.
We sample each entry in $A$ and $B$ i.i.d. from $\Norm(0, \sigma{=}n^{-1/2})$
and $\Norm(0, \sigma{=}m^{-1/2})$ respectively.
To construct a random $k \times k$ positive definite matrix,
we sample from the $\mathrm{Wishart}(k^{-1}I, k)$ distribution by
computing $Q = X^T X$ for $X$ i.i.d. analogous to $A$.
The scale factor $k^{-1}$ ensures that if the vector $x$
is distributed by $\Norm(0, k^{-1}I)$,
such that $\E[\|x\|^2] = 1$,
then $\E[x^T Q x] = 1$.
We sample $Q, R, \Sigma_s$, and $\Sigma_a$ this way.

In each experiment, we vary some of these parameters systematically
while holding the others constant,
allowing us to visualize the impact of each parameter on $\sumvar(\hat g)$.
We plot the upper bound of \Cref{thm:upper}
on the top row of \Cref{fig:compare_empirical},
and the empirical estimate of $\sumvar(\hat g)$ on the bottom row.
In both cases, since the variance depends on the initial state $s_1$,
we sample $N = 100$ initial states $s_1$ from  $\Norm(0, n^{-1} I)$
and plot $\E_{s_1 \sim \Norm(0, n^{-1} I)} \sumvar(\hat g)$.
We estimate $\sumvar(\hat g) |_{s_1 = s}$ for a particular initial state $s$ by sampling $30$ trajectories with random
$\epsilon^a,\ \epsilon^s$.

\paragraph{Effect of \texorpdfstring{$\Sigma_a$}{Sigma_a}.}
\label{sec:sigmaA}
In this experiment, we generate a random LQR problem
and replace $\Sigma_a$ with $\sigma_a I$
for $\sigma_a$ geometrically spaced in the range $[10^{-2}, 10^2]$.
Using a scaled identity matrix is common practice when applying RL to a problem where there is
no \emph{a priori} reason to correlate the noise between
different action dimensions.
We evaluate the variance at the value $K = K^\star$,
where $K^\star$ is the infinite-horizon optimal controller computed using traditional LQR synthesis,
as described in \Cref{sec:lqr}.
Using $K^\star$ ensures that $\rho(A + BK) < 1$,
a required condition to apply \Cref{thm:upper}.

Results are shown in \Cref{fig:sigmaA}.
The separate line plots correspond to scaling the random $\Sigma_s$ by the values $\{ 0.1, 1, 10 \}$,
while the $x-$axis corresponds to the value of $\sigma_a$.
For each value of $\sigma_s$,
there appears to be a unique $\sigma_a$ that minimizes $\sumvar(\hat g)$,
and this value of $\sigma_a$ increases with $\sigma_s$.
This phenomenon appears in both the bound and empirical variance.

\paragraph{Effect of \texorpdfstring{$\|B\|$}{|B|}.}
\label{sec:Bmag}
In this experiment, we generate a random problem where $m = n$
and replace $B$ with $b I$
for $b$ geometrically spaced in the range $[10^{-2}, 10^2]$.
The resulting system essentially gives the policy direct control over each state.
For each $B$, we compute a separate infinite-horizon optimal $K$ and sample the variance for different $s_1$.
Results are shown in \Cref{fig:Bmag}.
The separate line plots correspond to scaling both the random $\Sigma_s$
and the random $\Sigma_a$
by the values $\{ 0.1, 1, 10 \}$.
The $x-$axis corresponds to the value of $b$.
Again, there appears to be a unique $\|B\|$ that minimizes $\sumvar(\hat g)$,
but its value changes minimally for different magnitudes of $\Sigma_s,\ \Sigma_a$.

\paragraph{Effect of \texorpdfstring{$\rho(A + BK)$}{Spectral Radius of A + BK}.}
\begin{wrapfigure}{r}{50mm}
  \begin{center}
    \includegraphics[width=50mm]{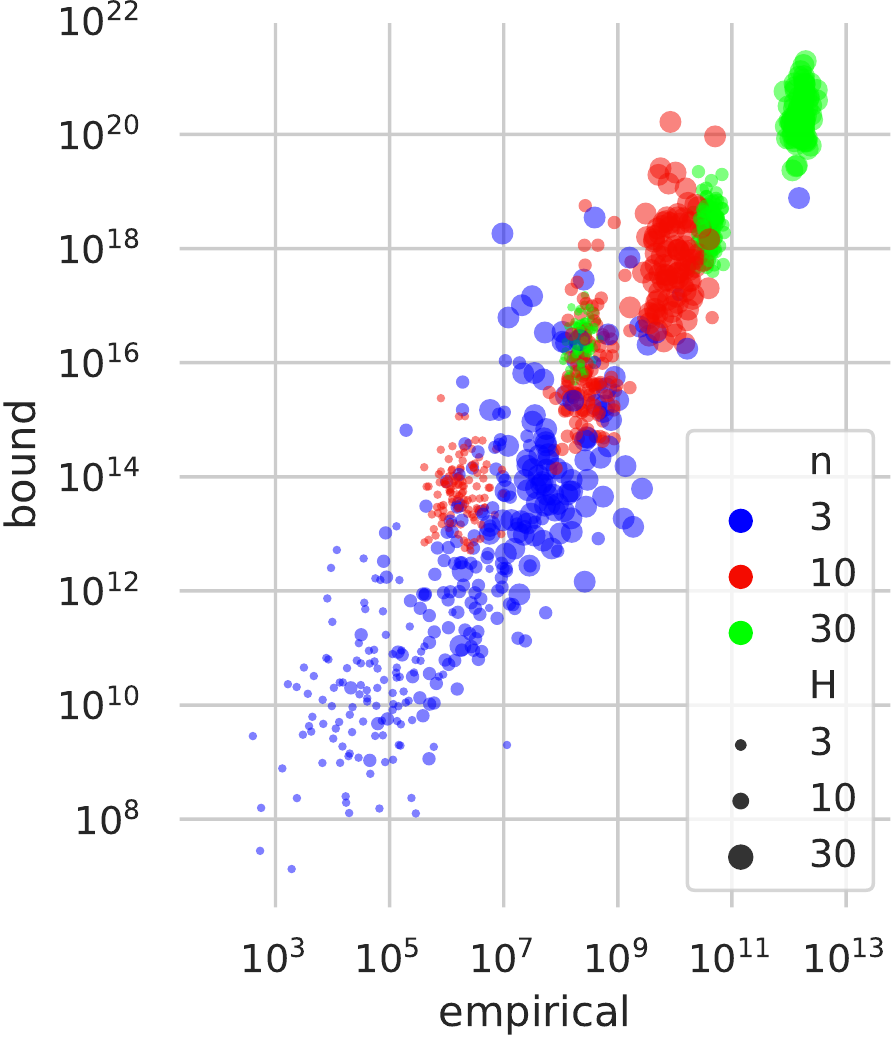}
  \end{center}
  \caption{
  Scatter plot of empirical $\sumvar(\hat g)$ ($x-$axis)
  and upper bound from \Cref{thm:upper} ($y-$axis)
  with varying state dimensionality $n$ and time horizon $H$.
  Each point represents one random LQR problem.
  }
  \label{fig:scatter}
\end{wrapfigure}
\label{sec:rho}
Here we measure the change in variance with respect to
the closed-loop spectral radius $\rho(A + BK)$.
To synthesize controllers $K$ such that $\rho(A + BK)$ obtains a specified value,
we use the pole placement algorithm of \citet{tits-pole-placement}.
A pole placement algorithm $\mathscr{P}$ is a function \[
K = \mathscr{P}(A, B, \lambda_1, \dots, \lambda_n), \quad \lambda_i \in \C,
\]
such that the eigenvalues of $A + BK$ are  $\lambda_1, \dots, \lambda_n$.
We sample a ``prototype'' set of $n$ eigenvalues
with $\lambda_1, \dots, \lambda_{\ceil{n / 2}}$ as complex conjugate pairs
$\lambda_i, \lambda_{i+1} = r e^{\pm i \varphi}$, 
where ${r \sim \Uniform([0, 1])}$ and $\varphi \sim \Uniform([0, \pi))$,
and sample the remaining real $\lambda_i$ from $\Uniform([-1, 1])$.
Then, for each desired $\rho$, we compute
$K_\rho = \mathscr{P}(A, B, \rho \lambda_1, \dots, \rho \lambda_n).$
By rescaling the same set of $\lambda_i$ instead of sampling a new set for each $\rho$,
we avoid confounding effects from changing other properties of $K$.

Results are shown in \Cref{fig:rho}.
Again, we repeat the experiment for different magnitudes of $\Sigma_s$ and $\Sigma_a$.
Unlike the previous two experiments, here we see qualitatively different behavior
between our upper bound and the empirical variance.
The bound begins to increase rapidly near $\rho = 1$,
corresponding to the growth of $1/(1 - \rho)$ in the term $H'$,
but at $\rho = 0.9$ the $H$ term becomes active in $H'$, and the bound suddenly flattens.
In contrast, the empirical variance grows more moderately
and does not explode near the threshold of system instability.
This provides further evidence that the upper bound of \Cref{thm:upper} can be tightened to match
the $\sqrt{H'}$ and $\sqrt{H}$ terms in
the special-case lower bound of \Cref{thm:lower}.

\paragraph{Dimensionality parameters.}
In all of the preceding experiments, we arbitrarily chose the state and action dimensions
${n = 5},\ {m = 3}$
and time horizon $H = 10$.
To visualize the variance for other values of these parameters,
we generate $1000$ random LQR problems with $n$ and $H$ each varying over the set $\{3, 10, 30\}$.
We fix $m = \ceil{n/2}$.
Results are shown in \Cref{fig:scatter}.
The overall positive trend with a slope greater than $1$ shows that
the bound grows superlinearly with respect to the empirical, as expected.
One interesting property is the tighter clustering for large values of $n$.
This may be due to several eigenvalue distribution results in random matrix theory
which state that, as $n \to \infty$, our random LQR problems tend to become similar up to a basis change
\citep{tao-randommatrix}.

\subsection{RL learning curves for varying $\Sigma_a$}
\label{sec:sigmaA-curves}
The results in \Cref{sec:sigmaA} suggest that,
for a fixed $\Sigma_s$, the magnitude of $\Sigma_a$
has a significant effect on $\sumvar(g)$.
This is of practical interest because $\Sigma_a$ is usually under control of the RL practitioner.
It is therefore natural to ask if
the change in variance corresponds to
a change in the rate of convergence of REINFORCE.
We test this empirically by executing REINFORCE in variants of one random LQR problem
with different values of $\Sigma_a$ and $\Sigma_s$.
To avoid a confounding effect from larger $\Sigma_a$
incurring greater penalty from the
$-a_t^T R a_t$ term in $r_t$,
we evaluate the trained policies in a modified version of the problem
where $\Sigma_a = \Sigma_s = \mathbf{0}$.
As discussed in \Cref{sec:lqr}, the optimal $K^\star$ for the stochastic problem
is also optimal for the deterministic problem,
so each problem variant should converge to the same evaluation returns in the limit.

We initialize $K$ by perturbing the elements of the LQR-optimal controller
with i.i.d. Gaussian noise and scaling the perturbation until $\rho(A+BK) \approx 0.98$.
After every $10$ iterations of REINFORCE,
we evaluate the current policy in the noise-free environment.
For each $(\Sigma_a, \Sigma_s)$ pair, we repeat this experiment $10$ times with different random seeds.
The random seed only affects the $\epsilon^a_t, \epsilon^s_t$ and $s_1$ samples.
The aggregate data are shown in \Cref{fig:sigmaA_curves}.
Shaded bands correspond to one standard deviation across the separate runs of REINFORCE. 
The lowercase $\sigma_a, \sigma_s$ refer to scaling factors
applied to the initial samples of $\Sigma_a, \Sigma_s$ in the random LQR problem.

The effect is quite different than one would predict from variance alone.
For all values of $\Sigma_s$ in the experiment,
problems with larger $\Sigma_a$ converge faster---whereas \Cref{fig:sigmaA} would suggest
that the ``optimal'' value of $\Sigma_a$ changes with respect to $\Sigma_s$.
The fact that larger $\Sigma_a$ tends to make REINFORCE converge faster
is not obvious, given the $\Sigma_a^{-1}$ term in $\hat g$ \eqref{eqn:g-hat}.
Also, when $\Sigma_a$ is very small and $\Sigma_s$ is very large,
the algorithm becomes unstable and sees large variations across different random seeds.
For the middle values $\Sigma_a \in \{0.1, 1.0\}$, we observe that larger $\Sigma_s$ causes faster convergence.

\begin{figure}
    \centering
    \includegraphics[width=\columnwidth]{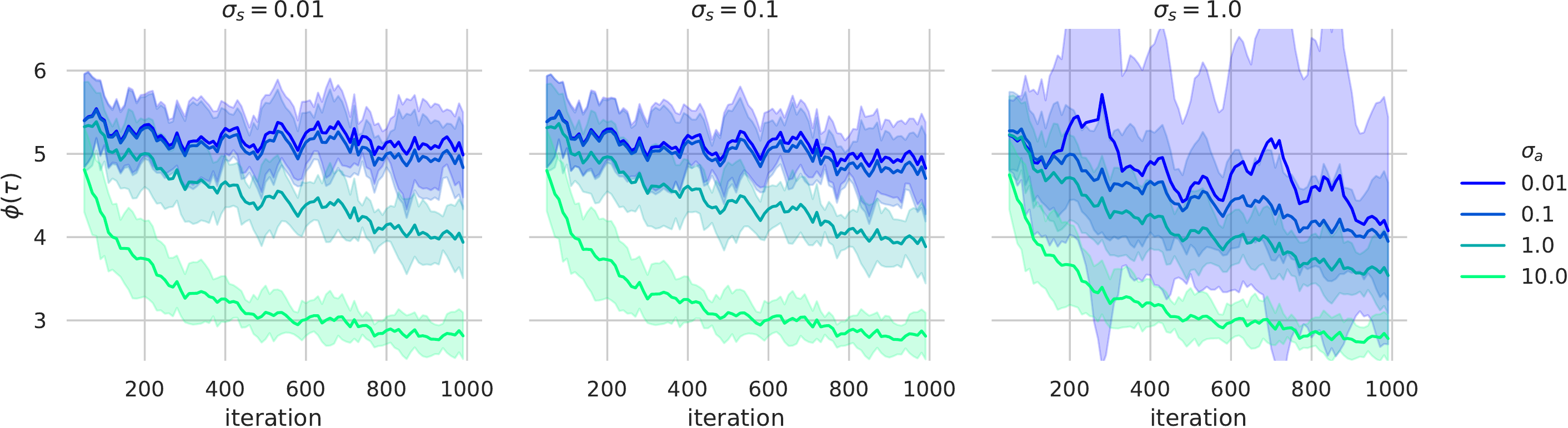}
    \caption{
        Learning curves of REINFORCE for a random LQR problem
        with varying scales of action noise $\sigma_a$ and environment noise $\sigma_s$.
        Larger $\sigma_a$ strictly improves learning, despite larger variance of $\hat g$.
    }
    \label{fig:sigmaA_curves}
\end{figure}

\section{Discussion}
In this work, we derived bounds on the variance of the REINFORCE policy gradient estimator
in the stochastic linear-quadratic control setting.
Our upper bound is fully general, while our lower bound applies to the scalar case
at a stationary point.
The bounds match with respect to all system parameters
except the time horizon $H$ and stability $\rho(A + BK)$.
We compared our bound prediction to the empirical variance in a variety of experimental settings,
finding a close qualitative match in the parameters for which the bounds are tight.

Our experiments in \Cref{sec:sigmaA-curves}
plotting the empirical convergence rate of REINFORCE
suggest that the effect of action noise $\Sigma_a$ on the overall RL performance
is not fully captured by its effect on the variance.
An interesting direction for future work would be to investigate the role of $\Sigma_a$ more closely
and attempt to disentangle its effect on gradient magnitude, variance, exploration, and regularization.
Such an analysis could lead to improved variance reduction methods
or algorithms that manipulate $\Sigma_a$ to speed up the RL optimization.

\subsubsection*{Acknowledgements}
The authors thank
Gaurav S. Sukhatme and Fei Sha
for their discussions regarding this work.
\bibliographystyle{plainnat}
\bibliography{bibliography}{}

\newpage
\appendix

\section{Proof of Theorem~\ref{thm:upper}}
\label{appendix:upper}
In this appendix, we provide the detailed derivation of the upper bound stated in \Cref{thm:upper}.
We define the following notations: let $\delta^s_t$ and $\delta_t^a$ be independent random vectors
that follow $\Norm(\mathbf{0}, I_n)$ and $\Norm(\mathbf{0}, I_m)$ respectively for all $t$.
The $\epsilon^s_t$ and $\epsilon^a_t$ defined in \Cref{eq:lqr-dynamics,eq:lqr-policy}
are thus written as $\epsilon^s_t = \Sigma_s^{\frac{1}{2}}\delta^s_t$
and $\epsilon^a_t = \Sigma_a^{\frac{1}{2}}\delta^a_t$. 
We will use the following steps to upper-bound $\E\left[\tr(\gtg)\right]$:
\begin{enumerate}
\item Bound $\tr(\gtg)$ by $P$,
a polynomial of ${d^s_t \triangleq \|\delta^s_t\|}$ and ${d^a_t \triangleq \|\delta^a_t\|}$
for ${t=1, 2, \ldots, H}$.
We restrict that $P$ should have only nonnegative coefficients.
Since we assume $\delta_t^s$ and $\delta_t^a$ are independent random vectors with standard normal distribution,
$d^s_t$ and $d^a_t$ will be independent random variables following the $\chi(n)$ and $\chi(m)$ distributions respectively.
\item Bound the sum of the (already nonnegative) coefficients of $P$ by substituting all of the $d^s_t,\ d^a_t$ with one. More formally, let 
\begin{align*}
    P(\{d_t^s, d_t^a\}_{t=1}^H) = \sum_{i} C_i \prod_{t=1}^H (d_t^s)^{\alpha_{t,i}} \prod_{t=1}^H (d_t^a)^{\beta_{t,i}}, 
\end{align*}
where $\prod_{t=1}^H (d_t^a)^{\alpha_{t,i}} \prod_{t=1}^H (d_t^s)^{\beta_{t,i}}$ is the $i$-th monomial in $P$, and $C_i$ is its nonegative coefficient. Then we calculate $\sum_i C_i$ by substituting all $d_t^s, d_t^a$ with $1$. We use the notation $\onevars{P}$ to denote this operation (we define the $\onevars{\cdot}$ operator analogously for expressions other than $P$ itself): 
\begin{align*}
    \onevars{P} = P\vert_{d_t^s=1, d_t^a=1, \forall t} = \sum_{i}C_i. 
\end{align*}

\item Bound for the expectation of all monomials in $P$, i.e., find $M$ such that
\begin{align*}
    \E\left[ \prod_{t=1}^H (d_t^s)^{\alpha_{t,i}} \prod_{t=1}^H (d_t^a)^{\beta_{t,i}} \right] \leq M, \quad \forall i. 
\end{align*}
\end{enumerate}

With the three steps above, we can then bound $\E\left[\tr(\gtg)\right]\leq \E[P]\leq \sum_i C_iM = M\onevars{P}$. To calculate $M$, we can use the known formula for the $k$-th moment of a $\chi$ random variable.

Recall that 
\begin{align*}
    \hat{g} = \left(\sum_{t=1}^H \Sigma_a^{-1} \epsilon_t^{a}s_t^\top  \right)\left( \sum_{t=1}^H r_t \right), 
\end{align*}
and thus 
\begin{align*}
    \tr(\gtg) = 
        \underbrace{\tr \left[
		\left( \sumH \Sigma_a^{-1} \epsilon^a_t s_t^\top \right)^\top
		\left( \sumH \Sigma_a^{-1} \epsilon^a_t s_t^\top \right)
	\right]}_{\term{1}}
	\underbrace{
	    \left( \sum_{t=1}^H r_t \right)^2}_{\term{2}}. 
\end{align*}

Thanks to the property $\onevars{P}=\onevars{P_1}\onevars{P}$ for polynomial $P\equiv P_1P$, and $\onevars{P}=\onevars{P_1}+\onevars{P}$ for $P\equiv P_1+P$, we do not need to directly find $P$ and bound $\onevars{P}$. Instead, we can bound $\onevars{P'}$ for some smaller-order component $P'$ of $P$, and then using the above addition/multiplication operations to obtain $\onevars{P}$.  Below in Section~\ref{subsection: bound s_t}, we first bound $\|s_t\|$ by a polynomial $\ceil{s_t}$ of $\{d_\tau^s, d_\tau^a\}$, and then find $\onevars{\ceil{s_t}}$. In Section~\ref{subsectionb: bound term1} and \ref{subsectionb: bound term2}, we further upper bound $\onevars{\term{1}}$ and $\onevars{\term{2}}$ with the help of $\onevars{\ceil{s_t}}$. Then finally we obtain an upper bound for $\E[\tr(\gtg)]$ as $M\onevars{\term{1}}\onevars{\term{2}}$. 

\subsection{Bounding $\|s_t\|$}
\label{subsection: bound s_t}
In this section, we bound $\|s_t\|$ from above.
Although the spectral radius $\rho(A + BK)$ determines the asymptotic stability of the closed-loop system,
it guarantees little about the transient behavior--for example,
for any $x > 1$ and $0 < \epsilon < 1$, the matrix
\[
A = \begin{bmatrix}
\epsilon & x \\
0 & \epsilon
\end{bmatrix}
\]
has the properties $\rho(A) < 1,\ \|A\| > x$.
Therefore, while the state magnitude $\|s_t\|$
is bounded by the operator norm $\|A + BK\|^{t-1}$,
it is too restrictive to require $\|A + BK\| < 1$.
Instead, we will use the following result from the literature:
\begin{lemma}[\citet{trefethen-pseudospectra}]
\label{lemma-resolvent}
Let $A \in \R^{n \times n}$ be a matrix with $\rho(A) < 1$.
Then there exists $\mu > 0$ such that, for all $k \in \N$, 
\begin{equation}
\| A^k \| \leq \mu \rho(A)^k.
\end{equation}
$\mu$ is bounded by the ``resolvent condition'' $\mu \leq 2 e n r(A)$, where $e$ is the exponential constant and 
\begin{equation}
\label{eq:resolvent}
r(A) = \sup_{z \in \C,\ |z| > 1} (|z| - 1) \| (zI - A)^{-1} \|.
\end{equation}
\end{lemma}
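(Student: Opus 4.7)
The plan is to obtain the bound via the holomorphic functional calculus, representing $A^k$ as a contour integral of the resolvent and then estimating the integral using $r(A)$. Since $\rho(A)<1$, for any $R>\rho(A)$ the circle $\{|z|=R\}$ encloses the spectrum of $A$, so Cauchy's integral formula for matrix functions gives
\[
A^k \;=\; \frac{1}{2\pi i}\oint_{|z|=R} z^k (zI-A)^{-1}\,dz.
\]
Passing to operator norms under the integral and choosing $R>1$ so that the hypothesis $\|(zI-A)^{-1}\|\le r(A)/(|z|-1)$ applies directly yields
\[
\|A^k\| \;\le\; R^{k+1}\sup_{|z|=R}\|(zI-A)^{-1}\| \;\le\; \frac{R^{k+1}\,r(A)}{R-1}.
\]

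I would then optimize the contour radius. The choice $R=1+1/k$ gives $R^{k+1}\le e(1+1/k)\le 2e$ and $1/(R-1)=k$, producing the naive Kreiss-style estimate $\|A^k\|\le 2ek\,r(A)$. To replace the $k$ factor by the dimension factor $n$ claimed in the statement, I would invoke Spijker's lemma, which bounds the total variation of a degree-$n$ rational function on a circle by a universal constant times $n$, independent of the contour radius; this sharpening is the content of the tight Kreiss matrix theorem. To produce the geometric decay $\rho(A)^k$, I would apply the same argument to the rescaled matrix $A/\rho$ for any $\rho\in(\rho(A),1)$: since $\rho(A/\rho)<1$, the previous estimate gives $\|(A/\rho)^k\|\le C$, hence $\|A^k\|\le C\rho^k$; tracking how $r(A/\rho)$ compares to $r(A)$ and letting $\rho\downarrow\rho(A)$ (legitimate because the strict hypothesis $\rho(A)<1$ precludes Jordan blocks on the critical spectral circle) recovers the asserted decay rate with the constant $\mu\le 2en\,r(A)$.

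The main obstacle is the Spijker-lemma sharpening that produces the dimension factor $n$ rather than the brute-force factor $k$ from optimizing the naive arc-length estimate; the rest reduces to routine complex-analytic manipulation of the resolvent. A secondary subtlety is keeping careful track of how $r(A/\rho)$ inherits bounds from $r(A)$ through the rescaling step, so that the final constant depends only on the original resolvent bound rather than on an auxiliary resolvent norm whose radius varies with $\rho$.
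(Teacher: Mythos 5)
The paper does not prove Lemma~\ref{lemma-resolvent}; it quotes it from \citet{trefethen-pseudospectra}, so there is no internal proof to compare against and your attempt must be judged on its own. Your first three steps are the standard and correct derivation of the Kreiss matrix theorem: the contour representation of $A^k$, the arc-length estimate with $R=1+1/k$ giving $\|A^k\|\le 2ek\,r(A)$, and Spijker's lemma to replace the factor $k$ by the dimension $n$. Up to that point you have correctly established the uniform bound $\sup_{k}\|A^k\|\le 2en\,r(A)$, i.e.\ the lemma with $\rho(A)^k$ replaced by $1$.

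The final rescaling step, however, does not go through, and the obstruction is genuine. The parenthetical claim that the hypothesis $\rho(A)<1$ ``precludes Jordan blocks on the critical spectral circle'' is false: $\rho(A)<1$ constrains only the location of the spectrum, not the Jordan structure at eigenvalues of modulus $\rho(A)$. Changing variables $w=\rho z$ gives
\[
r(A/\rho)=\sup_{|w|>\rho}\,(|w|-\rho)\,\bigl\|(wI-A)^{-1}\bigr\|,
\]
which involves the resolvent on the annulus $\rho<|w|\le 1$ and is therefore not controlled by $r(A)$, whose supremum ranges only over $|w|>1$; when a peripheral eigenvalue is defective, $r(A/\rho)\to\infty$ as $\rho\downarrow\rho(A)$, so the limit you propose cannot be taken. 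Indeed the inequality $\|A^k\|\le\mu\,\rho(A)^k$ with a $k$-independent $\mu$ fails outright for defective matrices: for
\[
A=\begin{pmatrix}1/2 & M\\ 0 & 1/2\end{pmatrix}
\]
one has $\|A^k\|\ge kM\,2^{-(k-1)}$, so $\|A^k\|/\rho(A)^k\ge 2kM$ is unbounded in $k$. What the cited reference actually supplies is either the uniform Kreiss bound above, or, for any fixed $\gamma\in(\rho(A),1)$, a bound $\|A^k\|\le\mu_\gamma\,\gamma^k$ with $\mu_\gamma$ depending on $\gamma$ and blowing up as $\gamma\downarrow\rho(A)$ in the defective case. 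Your attempt is a faithful reconstruction of the only plausible route to the stated constant, and it breaks exactly where the statement itself overreaches; note that the downstream use in \Cref{appendix:upper} would survive unchanged if one takes the decay rate to be any $\gamma\in(\rho(A+BK),1)$ rather than $\rho(A+BK)$ itself.
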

The derivation and interpretation of \eqref{eq:resolvent} is a deep subject
related to the matrix \emph{pseudospectrum}, covered extensively by \citet{trefethen-pseudospectra}.
Intuitively, $r(A)$ is large if a small perturbation $\epsilon \in \R^{n \times n}$
would cause $\rho(\epsilon + A) > 1$.

Expanding the state transition function in \Cref{eq:lqr-dynamics} with the linear stochastic policy in \Cref{eq:lqr-policy}, we get 
\begin{align}
    s_t = (A+BK)^{t-1} s_1 +
\sum_{\tau=1}^{t-1} (A+BK)^{t-\tau-1} (\Sigma_s^\half \delta^s_\tau + B \Sigma_a^\half \delta^a_\tau). \label{eqn: s_t expansion}
\end{align}

Recall that $\|\cdot\|$ denotes the $\ell_2$ norm for vectors and the $\ell_2-\ell_2$ operator norm for matrices.
By \Cref{lemma-resolvent}, there exists $\mu$
such that 
$\| (A + BK)^k \| \leq \mu \rho(A)^k$.
Let $\ABKradius = \rho(A + BK)$,
$\sigma_s^2 = \|\Sigma_s\|$,
$\sigma_a^2 = \|\Sigma_a\|$,
and $b = \|B\|$. By repeatedly applying the triangle inequality,
\begin{equation}\begin{split}
\|s_t\|
&= \left\| (A+BK)^{t-1} s_1 +
\sum_{\tau=1}^{t-1} (A+BK)^{t-\tau-1} (\Sigma_s^\half \delta^s_\tau + B \Sigma_a^\half \delta^a_\tau)
\right\| \\
&\leq \left\| (A+BK)^{t-1} s_1 \right\| +
\sum_{\tau=1}^{t-1} \left\| (A+BK)^{t-\tau-1} (\Sigma_s^\half \delta^s_\tau + B \Sigma_a^\half \delta^a_\tau)
\right\| \\
&\leq \mu f^{t-1} \|s_1\| +
\mu \sum_{\tau=1}^{t-1} f^{t-\tau-1} (\sigma_s d^s_\tau + b\sigma_a d^a_\tau).
\label{st_bound}
\end{split}\end{equation}
We denote the final bound in~\eqref{st_bound} as $\ceil{s_t}$.
The bound $\ceil{s_t}$ is linear in the random variables
$\{d^s_t, d^a_t\}_{t=1}^H$
with only positive coefficients.
Furthermore,
\begin{equation}\begin{split}
\onevars{\ceil{s_t}}
&= \mu f^{t-1} \|s_1\| + \mu \sum_{\tau=1}^{t-1} f^{t-\tau-1} (\sigma_s + b\sigma_a ) \\
&= \mu f^{t-1} \|s_1\| + \mu \sum_{\tau=0}^{t-2} f^\tau (\sigma_s + b\sigma_a ).
\end{split}\end{equation}

\subsection{Bounding $\term{1}$}
\label{subsectionb: bound term1}
\begin{lemma}
\[
	\tr \left[
		\left( \sumH \Sigma_a^{-1} \epsilon^a_t s_t^\top \right)^\top
		\left( \sumH \Sigma_a^{-1} \epsilon^a_t s_t^\top \right)
	\right]
	\leq
	\|\Sigma_a^{-1}\| \left( \sumH d^a_t \|s_t\| \right)^2.
\]
\label{lemma-trace-bound}
\end{lemma}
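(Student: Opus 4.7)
\textbf{Proof proposal for Lemma~\ref{lemma-trace-bound}.}
The plan is to recognize the trace expression as a squared Frobenius norm and then bound it by applying the triangle inequality term-by-term, exploiting the fact that each summand is a rank-one outer product.

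First, let $M \triangleq \sum_{t=1}^H \Sigma_a^{-1} \epsilon^a_t s_t^\top$, so the left-hand side equals $\tr(M^\top M) = \|M\|_F^2$. Using the substitution $\epsilon^a_t = \Sigma_a^{\half} \delta^a_t$ introduced at the start of Appendix~\ref{appendix:upper}, I simplify $\Sigma_a^{-1} \epsilon^a_t = \Sigma_a^{-\half} \delta^a_t$, so that
\[
M = \sum_{t=1}^H \bigl(\Sigma_a^{-\half} \delta^a_t\bigr) s_t^\top
\]
is a sum of $H$ rank-one outer products of vectors. Applying the triangle inequality for $\|\cdot\|_F$ and then the identity $\|u v^\top\|_F = \|u\|\cdot \|v\|$ (valid for any vectors $u,v$), followed by sub-multiplicativity of the operator norm, I obtain
\[
\|M\|_F \;\leq\; \sum_{t=1}^H \|\Sigma_a^{-\half} \delta^a_t\|\cdot \|s_t\| \;\leq\; \|\Sigma_a^{-\half}\| \sum_{t=1}^H d^a_t\, \|s_t\|,
\]
using the definition $d^a_t = \|\delta^a_t\|$.

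Squaring both sides and invoking the fact that $\Sigma_a \in \PD{m}$ is symmetric, so that $\|\Sigma_a^{-\half}\|^2 = \|\Sigma_a^{-1}\|$ (both equal the reciprocal of the smallest eigenvalue of $\Sigma_a$), yields the claim. No step here presents a real obstacle — the derivation is a short chain of elementary norm inequalities. The one point worth being careful about is that the triangle inequality must be applied \emph{before} squaring, so that the resulting bound is polynomial in the $d^a_t$ with nonnegative coefficients; this is precisely the structure required by the three-step strategy outlined at the start of Appendix~\ref{appendix:upper}, where $\onevars{\cdot}$ is applied to the resulting polynomial.
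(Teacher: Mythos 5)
Your proof is correct and is essentially the paper's argument in different packaging: the paper expands the trace into the double sum $\sumIJ |{\delta^a_i}^\top \Sigma_a^{-1} \delta^a_j|\,|s_i^\top s_j|$ and applies Cauchy--Schwarz to each cross term, which is exactly what one recovers by expanding the square of your Frobenius-norm triangle-inequality bound. Your identities $\Sigma_a^{-1}\epsilon^a_t = \Sigma_a^{-\half}\delta^a_t$ and $\|\Sigma_a^{-\half}\|^2 = \|\Sigma_a^{-1}\|$ play the same role as the paper's observation that $\Sigma^{\half}\Sigma^{-2}\Sigma^{\half} = \Sigma^{-1}$, so nothing is missing.
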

\begin{proof}
Let $\xi_t = \Sigma_a^{-1} \epsilon^a_t$. Then
\begin{equation}\begin{split}
	\tr \left[ \left( \sumH \xi_t s_t^\top  \right)^\top  \left( \sumH \xi_t s_t^\top  \right) \right]
		&= \tr \left[ \sumIJ s_i \xi_i^\top  \xi_j s_j^\top  \right] \\
	&= \sumIJ  \xi_i^\top  \xi_j s_i^\top  s_j \\
	&\leq \sumIJ  |\xi_i^\top  \xi_j| |s_i^\top  s_j| \\
	&= \sumIJ  |{\delta^a_i}^\top  \Sigma_a^{-1} \delta^a_j| |s_i^\top  s_j| \\
	&\leq \sumIJ  \|\Sigma_a^{-1}\| \|\delta^a_i\|  \|\delta^a_j\| \|s_i\| \|s_j\| \\
	&= \|\Sigma_a^{-1}\| \left( \sumH \|\delta^a_t\| \|s_t\| \right)^2\\
	&\leq \|\Sigma_a^{-1}\| \left( \sumH d_t^a \ceil{s_t} \right)^2,
\end{split}\end{equation}
in which we make use of the Cauchy-Schwarz inequality
and the fact that $\Sigma^{\half} \Sigma^{-2} \Sigma^{\half} = \Sigma^{-1}$
for positive semidefinite $\Sigma$.
\end{proof}

\subsection{Bounding $\term{2}$}
\label{subsectionb: bound term2}
We now bound $C \triangleq \sumH -r_t$ from above.
Note that $-r_t \geq 0$, since $Q \succeq 0$ and $R \succ 0$.
Let $q = \|Q\|$, $r = \|R\|$, and $k = \|K\|$.
Then
\begin{equation}\begin{split}
C = \sumH -r_t &= \sumH s_t^\top  Q s_t + a_t^\top  R a_t \\
&\leq \sumH q \|s_t\|^2 + r \|a_t\|^2 
= \sumH q \|s_t\|^2 + r \|K s_t + \Sigma_a^\half \delta^a_t \|^2 \\
&\leq \sumH q \|s_t\|^2 + r(k \|s_t\| + \sigma_a \|\delta^a_t\|)^2 \\
&\leq \sumH q \|s_t\|^2 + 2rk^2 \|s_t\|^2 + 2r\sigma_a^2 \|\delta^a_t\|^2 \\
&\leq (q + 2rk^2) \sumH \ceil{s_t}^2 + 2r\sigma_a^2 \sumH (d^a_t)^2,
\end{split}\end{equation}
where the triangle inequality and the fact $(a + b)^2 \leq 2(a^2 + b^2)$ are used above.
This bound on $C$ is a quadratic polynomial in the $d^s, d^a$.

\subsection{Combining bounds}
Combining Section~\ref{subsectionb: bound term1} and Section~\ref{subsectionb: bound term2}, we have
\begin{equation}\begin{split}
\tr(\gtg) \leq P =
C^2 \|\Sigma_a^{-1}\| \left( \sumH d^a_t \ceil{s_t} \right)^2.
\end{split}\end{equation}
For brevity, let
$\alpha = \|\Sigma_a^{-1}\|$,
$\beta = q + 2rk^2$,
$\gamma = 2r\sigma_a^2$,
$\sigma = \sigma_s + b \sigma_a$,
and $H' \triangleq \min\left\{H ,\frac{1}{1-\ABKradius}\right\}$.
$H'$ reflects the stability of the closed-loop system:
if highly stable ($\ABKradius \ll 1$), we have $H' \ll H$,
but when approaching instability ($\ABKradius \to 1$), $H'$ approaches $H$.

We expand $P$ and substitute $d^s_t = 1,\ d^a_t = 1$ for all $t$ to compute the sum of $P$'s coefficients,
using the notation $\onevars{\cdot}$ for the transformation of replacing all $d$ with $1$. 
We first bound $\onevars{C^2}$:
\begin{equation}\begin{split}
\onevars{C^2} \leq \left( \gamma H + \beta \sumH \onevars{\ceil{s_t}}^2 \right)^2
&\leq \left( \gamma H + \beta \mu^2 \sumH \left[ \ABKradius^{t-1} \|s_1\| + \sigma H' \right ]^2  \right)^2 \\
&\leq \left( \gamma H + 2\beta \mu^2 \sumH \left[ \ABKradius^{2t-2} \|s_1\|^2 + \sigma^2 H'^2 \right ] \right)^2 \\
&\leq \left( \gamma H + 2\beta \mu^2 ( H' \|s_1\|^2 + \sigma^2 HH'^2 ) \right)^2 \\
\end{split}\end{equation}
where the result is obtained by repeatedly applying the fact $(a+b)^2 \leq 2(a^2 + b^2)$.
Next,
\begin{equation}\begin{split}
\onevars{\left( \sumH d^a_t \ceil{s_t} \right)^2}
&\leq \left( \mu \sumH  \ABKradius^{t-1} \|s_1\| + \sigma H' \right)^2 \\
&\leq \mu^2 H'^2(\|s_1\| + \sigma H)^2. \\
\end{split}\end{equation}
Finally,
\begin{equation}\begin{split}
\onevars{P}
&\leq \alpha \mu^2 H'^2 \left( \gamma H + 2\beta \mu^2 ( H' \|s_1\|^2 + \sigma^2 HH'^2 ) \right)^2 (\|s_1\| + \sigma H)^2 \\
&= 4 \|\Sigma_a^{-1}\| \mu^2 H'^2 \left( r\sigma_a^2 H + \mu^2 (q + 2rk^2) ( H' \|s_1\|^2 + \sigma^2 HH'^2 ) \right)^2 (\|s_1\| + \sigma H)^2 \\
& \triangleq \overline{\onevars{P}}.
\end{split}\end{equation}
$\overline{\onevars{P}}$ is an order-8 polynomial in the $d^s, d^a$.
The formula for the 8th moment of a $\chi(n)$ random variable is
\[
\E[X^8] = n(n+2)(n+4)(n+6),
\]
so $\E[\tr(\gtg)] \leq \overline{\onevars{P}} \cdot O(\bar n^4)$, where $\bar n = \max(n,m)$.

(Since we are summing the variances of $O(\bar n^2)$ random variables in $\hat g$,
we would expect scaling of no less than $\bar n^2$ compared to the scalar case.)

\section{$\E [\hat{g}^2]$'s lower bound in the scalar case}
\label{appendix:lower}

In this section, we make the assumption that $m=n=1$, i.e., the states and actions are both scalars.
The matrices $A, B, K, Q, R$ are thus denoted as $a, b, k, q, r$ here
(notice that this notation $r$ is different from the notation $r_t$ for reward).
Other notations follow the definitions in Appendix~\ref{appendix:upper}.
The aim in this appendix is to derive a lower bound for $\E[\hat{g}^2]$ in the special case when $0\leq a+bk< 1$. 

\begin{lemma}
\begin{align}
    \mathbb{E}\left[\left(\sum_{t=1}^H \frac{\epsilon_{t}^{a}s_{t}}{\sigma_a^{2}}  \right)^2\left( \sum_{t=1}^H r_t \right)^2\right] \geq 
    \mathbb{E}\left[\left(\sum_{t=1}^H \frac{\epsilon_{t}^{a}s_{t}}{\sigma_a^{2}}  \right)^2\right]\mathbb{E}\left[\left( \sum_{t=1}^H r_t \right)^2\right]. \label{eqn: lower bound decomposition}
\end{align}
\end{lemma}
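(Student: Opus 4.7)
The plan is to establish $\Cov(X^2, Y^2) \geq 0$, where $X \triangleq \sigma_a^{-2}\sum_{t=1}^H \epsilon_t^a s_t$ and $Y \triangleq \sum_{t=1}^H r_t$; this is equivalent to the stated inequality. I will exploit the sign structure of the reward: because $q \geq 0$ and $r > 0$ in the scalar case, we have $-Y = qS + rA$ with $S \triangleq \sum_{t=1}^H s_t^2 \geq 0$ and $A \triangleq \sum_{t=1}^H a_t^2 \geq 0$. Squaring gives the conic decomposition
\begin{equation*}
Y^2 = q^2 S^2 + 2qr \cdot SA + r^2 A^2,
\end{equation*}
so by bilinearity of covariance it suffices to prove the three inequalities
\begin{equation*}
\Cov(X^2, S^2) \geq 0, \qquad \Cov(X^2, SA) \geq 0, \qquad \Cov(X^2, A^2) \geq 0.
\end{equation*}

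Each of these is handled by a direct Gaussian moment calculation. Unrolling the scalar recursion yields $s_t = (a+bk)^{t-1} s_1 + \sum_{\tau < t}(a+bk)^{t-\tau-1}(b\epsilon_\tau^a + \epsilon_\tau^s)$, so every $s_t$ is an affine functional of the Gaussian vector $\eta = (\epsilon_1^a,\ldots,\epsilon_H^a,\epsilon_1^s,\ldots,\epsilon_H^s)$. Consequently $X^2, S^2, SA, A^2$ are polynomials in $\eta$ of degree at most four, and each covariance difference $\E[X^2 Z] - \E[X^2]\E[Z]$ reduces to a finite sum of Gaussian eighth-order moments computable via Isserlis' formula. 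I will then verify that each of the three covariances, once expanded, is a polynomial in the problem parameters with only nonnegative coefficients; the assumption $0 \leq a + bk < 1$ ensures that every power of $a+bk$ that appears is nonnegative, and the odd-order parameter combinations that might otherwise change sign are killed by the vanishing of the odd-order Gaussian moments they multiply. In the base case $H = 1$, everything collapses to the scalar identities $\Cov(u^2, (c+u)^2) = 2\sigma_a^4$ and $\Cov(u^2, (c+u)^4) = 12 c^2 \sigma_a^4 + 12 \sigma_a^6$ with $u \sim \Norm(0,\sigma_a^2)$ and $c = k s_1$, both manifestly nonnegative, so the full covariance becomes $4qr s_1^4 + 12 r^2 k^2 s_1^4 + 12 r^2 s_1^2 \sigma_a^2 \geq 0$.

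The main obstacle is the mixed covariance $\Cov(X^2, SA)$. Unlike the pure squares $S^2$ and $A^2$, the product $SA$ interleaves $s$- and $a$-quadratics, and the identity $a_t = k s_t + \epsilon_t^a$ injects an $s\epsilon^a$-bilinear into $A$ whose structure exactly mirrors that of $X$ itself; as a consequence many more Wick pairings contribute and must be tracked carefully. I expect the cleanest organization is an induction on $H$: peel off the $t = H$ contribution of $\epsilon_H^a$ (for which the computation collapses to the $H = 1$ identities above, with $s_H$ playing the role of the initial state) and relate the remainder to an instance of the inductive hypothesis at horizon $H - 1$, so that no combinatorial enumeration of pairing topologies is needed at any single step.
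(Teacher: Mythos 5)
Your reduction to $\Cov(X^2,S^2)\ge 0$, $\Cov(X^2,SA)\ge 0$, $\Cov(X^2,A^2)\ge 0$ via the conic decomposition $Y^2 = q^2S^2 + 2qr\,SA + r^2A^2$ is sound, and your $H=1$ computations check out. But the argument stops exactly where the real work begins: for general $H$ you only announce that you \emph{will} verify nonnegativity of the expanded covariances and that you \emph{expect} an induction on $H$ to organize the mixed term. Neither is carried out, and the proposed induction is not obviously well-founded: $s_H$ and $a_H$ are affine in \emph{all} of $\epsilon_1^a,\dots,\epsilon_{H-1}^a,\epsilon_1^s,\dots,\epsilon_{H-1}^s$, so ``peeling off the $t=H$ contribution'' does not decouple the last summand from the horizon-$(H-1)$ problem; the cross-covariances between the peeled term and the remainder are themselves eighth-order Gaussian moment sums whose sign needs its own argument, and you have not supplied one. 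There is also a sign issue you gesture at but do not close: nonnegativity of coefficients requires controlling not only $a+bk\ge 0$ but also the signs of $s_1$ and $b$ (handled, e.g., by a WLOG sign flip of the Gaussians), and the claim that ``odd-order parameter combinations are killed by vanishing odd moments'' is precisely what must be proved for the mixed term $SA$, where odd powers of individual $\epsilon_\tau^a$ survive multiplication by $X^2$.

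The paper's proof makes all of this unnecessary. Both factors $X^2$ and $Y^2$ are polynomials with nonnegative coefficients in the independent standardized variables $\{\delta_t^s,\delta_t^a\}_{t=1}^H$ (under $a+bk\ge 0$), and for independent zero-mean Gaussians one has, monomial by monomial, $\E\bigl[\textstyle\prod_i x_i^{\alpha_i+\beta_i}\bigr] = \prod_i \E[x_i^{\alpha_i+\beta_i}] \ge \prod_i \E[x_i^{\alpha_i}]\,\E[x_i^{\beta_i}]$, since $\E[x^{\alpha+\beta}]\ge\E[x^{\alpha}]\E[x^{\beta}]$ for a single centered normal and any nonnegative integers $\alpha,\beta$ (both sides vanish unless the parities work out, and otherwise it is a double-factorial inequality). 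Summing over pairs of monomials with nonnegative coefficients gives the lemma in one step, with no Isserlis expansion, no enumeration of pairing topologies, and no induction. If you want to keep your route, the cleanest repair is to import exactly this monomial-wise inequality to establish each of your three covariance claims---at which point the conic decomposition is no longer needed.
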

\begin{proof}
When $a+bk\geq 0$, all terms have positive coefficients. 
Rename the $2H$ random variables $\{\delta_t^s, \delta_t^a\}_{t=1}^H$ as $x_1, \ldots, x_{2H}$. We can see that a monomial on the right-hand side: 
\begin{align*}
    \E\big[x_1^{\alpha_1}\cdots x_{2H}^{\alpha_{2H}}\big]
    \E\big[x_1^{\beta_1}\cdots x_{2H}^{\beta_{2H}}\big]
\end{align*}
corresponds to the monomial on the left-hand side: 
\begin{align*}
    \E\big[x_1^{\alpha_1+\beta_1}\cdots x_{2H}^{\alpha_{2H}+\beta_{2H}}\big]. 
\end{align*}
The $x_i$ are independent zero-mean normal random variables, so the property $\E[x_i^{\alpha}]\E[x_i^{\beta}]\leq \E[x_i^{\alpha+\beta}]$ holds for any non-negative integers $\alpha, \beta$. Combining with the fact that all coefficients are non-negative shows the lemma.  
\end{proof}

In the following two subsections, we lower bound the two terms on the right-hand side of Eq.~\eqref{eqn: lower bound decomposition} separately. Note that the first term can be simplified as
\begin{align*}
    \left(\sum_{t=1}^H \frac{\epsilon_{t}^{a}s_{t}}{\sigma_a^{2}}  \right)^2 = \left(\sum_{t=1}^H \frac{\delta_{t}^{a}s_{t}}{\sigma_a}  \right)^2 = \frac{1}{\sigma_a^2}\left(\sum_{t=1}^H \delta_{t}^{a}s_{t}  \right)^2. 
\end{align*}

\subsection{Lower bounding $\mathbb{E}\left[\left(\sum_{t=1}^H \delta_{t}^{a}s_{t} \right)^2\right]$}

By the expansion of $s_t$ in Eq.~\eqref{eqn: s_t expansion}, we have
\begin{align*}
 \mathbb{E}\left[\left(\sum_{t=1}^H \delta_{t}^{a}s_{t} \right)^2\right] 
 &=\mathbb{E}\left[\left(\sum_{t=1}^H \delta_{t}^{a} \left((a+bk)^{t-1}s_1 + L_t\right)\right)^2\right] \\
 &\geq \mathbb{E}\left[\left(\sum_{t=1}^H \delta_t^a(a+bk)^{t-1}s_1\right)^2\right] + \mathbb{E}\left[\left(\sum_{t=1}^H \delta_t^a L_t \right)^2\right],  
\end{align*}
where $L_t\triangleq \sum_{\tau=1}^{t-1}(a+bk)^{t-1-\tau}(\sigma_s\delta_\tau^s+b\sigma_a\delta_\tau^a)$. 
The first term is equal to $\sum_{t=1}^H (a+bk)^{2t-2}s_1^2=\frac{1-(a+bk)^{2H}}{1-(a+bk)^2}s_1^2$. The second term can be further written as
\begingroup
\allowdisplaybreaks
\begin{align*}
  &\mathbb{E}\left[\left(\sum_{t=1}^H \delta_t^a\sum_{\tau=1}^{t-1}(a+bk)^{t-1-\tau}(\sigma_s\delta_\tau^s+b\sigma_a\delta_\tau^a) \right)^2\right] \\  
  & = \mathbb{E}\left[\sum_{t=1}^H (\delta_t^{a})^2\left(\sum_{\tau=1}^{t-1}(a+bk)^{t-1-\tau}(\sigma_s\delta_\tau^s+b\sigma_a\delta_\tau^a) \right)^2\right] \\
  & = \mathbb{E}\left[\sum_{t=1}^H \left(\sum_{\tau=1}^{t-1}(a+bk)^{t-1-\tau}(\sigma_s\delta_\tau^s+b\sigma_a\delta_\tau^a) \right)^2\right] \\
  & = \mathbb{E}\left[\sum_{t=1}^H \sum_{\tau=1}^{t-1}(a+bk)^{2t-2-2\tau}(\sigma_s^2 + b^2\sigma_a^2) \right] \\
  & = (\sigma_s^2 + b^2\sigma_a^2) \mathbb{E}\left[ \sum_{t=1}^H \frac{1-(a+bk)^{2t-2}}{1-(a+bk)^2}  \right] \\
  & = (\sigma_s^2 + b^2\sigma_a^2) \left( \frac{H}{1-(a+bk)^2} - \frac{1-(a+bk)^{2H}}{(1-(a+bk)^2)^2} \right). 
\end{align*}
\endgroup
In the above several equalities, we use the independence among $\delta_t^a,\delta_t^s$.
Combining two terms, we get
\begin{align*}
    \mathbb{E}\left[\left(\sum_{t=1}^H \delta_{t}^{a}s_{t} \right)^2\right]&\geq \frac{s_1^2 + H(\sigma_s^2+b^2\sigma_a^2)}{1-(a+bk)^2} - \frac{(a+bk)^{2H}}{1-(a+bk)^2}s_1^2 - (\sigma_s^2+b^2\sigma_a^2)\frac{1-(a+bk)^{2H}}{(1-(a+bk)^2)^2}\\
    &= \left( \frac{1-(a+bk)^{2H}}{1-(a+bk)^2}s_1^2 +\left(H-\frac{1-(a+bk)^{2H}}{1-(a+bk)^2}\right)\frac{\sigma_s^2+b^2\sigma_a^2}{1-(a+bk)^2}\right)\\
    &\approx \begin{cases}
                 \frac{s_1^2+H(\sigma_s^2+b^2\sigma_a^2)}{1-(a+bk)^2}  & \text{when\ } H\gg \frac{1}{1-(a+bk)^2} \\
                 H\left(s_1^2+H(\sigma_s^2+b^2\sigma_a^2)\right) &\text{when\ } H\ll \frac{1}{1-(a+bk)^2} 
            \end{cases}\\
    & \approx \min\left\{H, \frac{1}{1-(a+bk)^2}\right\}\left( s_1^2+H(\sigma_s^2+b^2\sigma_a^2) \right)\\
    & \approx H'\left( s_1^2+H(\sigma_s^2+b^2\sigma_a^2) \right). 
\end{align*}

\subsection{Lower bounding $\mathbb{E}\left[\left( \sum_{t=1}^H r_t \right)^2\right]$}
We first lower bound this term by
\begin{align*}
    \E\left[\left(\sum_{t=1}^H r_t\right)^2\right]
    &\geq 
    \E\left[\sum_{t=1}^H r_t\right]^2 
    = \E\left[\sum_{t=1}^H qs_t^2 + r(ks_t+\sigma_a\delta_t^a)^2 \right]^2 \\
    &= \E\left[ \sum_{t=1}^H (q+rk^2)s_t^2 + 2rk s_t \sigma_a\delta_t^a + r\sigma_a^2\delta_t^{a2} \right]^2 \\
    &= \left((q+rk^2)\E\left[ \sum_{t=1}^H s_t^2 \right] + Hr\sigma_a^2\right)^2
\end{align*}
\begin{align*}
    \mathbb{E}\left[ \sum_{t=1}^H s_t^2 \right] 
    &= \mathbb{E}\left[ \sum_{t=1}^H \left((a+bk)^{t-1}s_1+L_t\right)^2 \right] \\
    & = \mathbb{E}\left[\sum_{t=1}^H \left((a+bk)^{2t-2} s_1^2 + 2(a+bk)^{t-1}s_1 L_t + L_t^2\right)\right] \\
    & = \mathbb{E}\left[\sum_{t=1}^H (a+bk)^{2t-2} s_1^2 + \sum_{t=1}^{H}\left(\sum_{\tau=1}^{t-1} (a+bk)^{t-1-\tau}(\sigma_s \delta_t^s + b\sigma_a \delta_t^a) \right)^2\right] \tag{the middle term $\E[(a+bk)^{t-1}s_1L_t]$ is zero because $L_t$ is a sum of zero-mean RVs}   \\
    & = \frac{1-(a+bk)^{2H}}{1-(a+bk)^2}s_1^2 + \mathbb{E}\left[ \sum_{t=1}^H \sum_{\tau=1}^{t-1}(a+bk)^{2t-2-2\tau}(\sigma_2^2+b^2\sigma_a^2) \right] \\
    & = \frac{1-(a+bk)^{2H}}{1-(a+bk)^2}s_1^2 + (\sigma_s^2 + b^2\sigma_a^2) \left( \frac{H}{1-(a+bk)^2} - \frac{1-(a+bk)^{2H}}{(1-(a+bk)^2)^2} \right)\\
    & \approx \min\left\{H, \frac{1}{1-(a+bk)^2}\right\}\left( s_1^2+H(\sigma_s^2+b^2\sigma_a^2) \right)\\
    & \approx H'\left( s_1^2+H(\sigma_s^2+b^2\sigma_a^2) \right).
\end{align*}
The second-to-last approximation is obtained similarly as in the previous subsection. 
\subsection{Combining}
Combining the results in previous two subsections, we get the final lower bound on $\E[\hat g^2]$:
\begin{align*}
    \mathbb{E}[\hat{g}^2] &= \mathbb{E}\left[\left(\sum_{t=1}^H \frac{\delta_{t}^{a}s_{t}}{\sigma_a}  \right)^2\left( \sum_{t=1}^H r_t \right)^2\right] \geq 
    \mathbb{E}\left[\left(\sum_{t=1}^H \frac{\delta_{t}^{a}s_{t}}{\sigma_a}  \right)^2\right]\mathbb{E}\left[\left( \sum_{t=1}^H r_t \right)^2\right] \\
            &\geq \Omega(c_1^2c_2^2), 
            \end{align*}
where (recall $\sigma \triangleq \sigma_s + b\sigma_a$)
\begin{align*}
    c_1 &= \frac{1}{\sigma_a}\left(|s_1| + \sigma \sqrt{H}\right)\sqrt{H'}, \\
    c_2 &= r\sigma_a^2 H + (q+rk^2)(s_1^2+\sigma^2 H)H'.  
\end{align*}

\end{document}